\newcommand{\ELBO}{\mathcal{L}}
\newcommand{\FIVO}[1]{\operatorname{\mathcal{L}_{#1}^{\scriptscriptstyle \mathrm{FIVO}}}}
\newcommand{\MCO}[1]{\operatorname{\mathcal{L}_{#1}}}
\newcommand{\IWAE}[1]{\operatorname{\mathcal{L}_{#1}^{\scriptscriptstyle \mathrm{IWAE}}}}
\newcommand{\AIS}[1]{\operatorname{\mathcal{L}_{#1}^{\scriptscriptstyle \mathrm{AIS}}}}
\newcommand{\MIS}[1]{\operatorname{\mathcal{L}_{#1}^{\scriptscriptstyle \mathrm{MIS}}}}
\newcommand{\seqfinalind}{T}
\newcommand{\obs}{x}
\newcommand{\obsspace}{\mathcal{X}}
\newcommand{\latent}{z}
\newcommand{\latentspace}{\mathcal{Z}}
\newcommand{\obsseq}[1]{\obs_{1:#1}}
\newcommand{\latentseq}[1]{\latent_{1:#1}}
\newcommand{\allobs}{\obsseq{\seqfinalind}}
\newcommand{\alllatents}{\latentseq{\seqfinalind}}
\newcommand{\eat}[1]{}
\newcommand\blfootnote[1]{%
  \begingroup
  \renewcommand\thefootnote{}\footnote{#1}%
  \addtocounter{footnote}{-1}%
  \endgroup
}
\title{Filtering Variational Objectives}
\author{
    Chris J. Maddison\textsuperscript{1,3,*}, Dieterich Lawson,\textsuperscript{2,*} George Tucker\textsuperscript{2,*}\\
    {\bfseries Nicolas Heess\textsuperscript{1}, Mohammad Norouzi\textsuperscript{2}, Andriy Mnih\textsuperscript{1}, Arnaud Doucet\textsuperscript{3}, Yee Whye Teh\textsuperscript{1}}\\[.2cm]
    \textsuperscript{1}DeepMind,
    \textsuperscript{2}Google Brain,
    \textsuperscript{3}University of Oxford\\
    \texttt{\{cmaddis, dieterichl, gjt\}@google.com}
}
\begin{document}

\maketitle

\begin{abstract}
When used as a surrogate objective for maximum likelihood estimation in latent variable models, the evidence lower bound (ELBO) produces state-of-the-art results. Inspired by this, we consider the extension of the ELBO to a family of lower bounds defined by a particle filter's estimator of the marginal likelihood, the \emph{filtering variational objectives} (FIVOs). FIVOs take the same arguments as the ELBO, but can exploit a model's sequential structure to form tighter bounds. We present results that relate the tightness of FIVO's bound to the variance of the particle filter's estimator by considering the generic case of bounds defined as log-transformed likelihood estimators. Experimentally, we show that training with FIVO results in substantial improvements over training the same model architecture with the ELBO on sequential data.
\end{abstract}
\blfootnote{\textsuperscript{*}Equal contribution.}
\section{Introduction}

Learning in statistical models via gradient descent is straightforward when the objective function and its gradients are tractable. In the presence of latent variables, however, many objectives become intractable. For neural generative models with latent variables, there are currently a few dominant approaches: optimizing lower bounds on the marginal log-likelihood \cite{kingma2013auto, rezende2014stochastic}, restricting to a class of invertible models \cite{dinh2016density}, or using likelihood-free methods \cite{goodfellow2014generative, nowozin2016f, tran2017deep, mohamed2016learning}. \eat{While likelihood-free methods are popular and result in models that generate crisp samples, they do not compare as favourably in quantitative log-likelihood evaluations \cite{wu2016quantitative, ivo2017gannvp}.} In this work, we focus on the first approach and introduce \emph{filtering variational objectives} (FIVOs),  a tractable family of objectives for maximum likelihood estimation (MLE) in latent variable models with sequential structure.

Specifically, let $\obs$ denote an observation of an $\obsspace$-valued random variable. We assume that the process generating $\obs$ involves an unobserved $\latentspace$-valued random variable $\latent$ with joint density $p(\obs, \latent)$ in some family $\mathcal{P}$. The goal of MLE is to recover $p \in \mathcal{P}$ that maximizes the marginal log-likelihood,
$\log p(\obs) = \log \left( \int p(\obs, \latent) \ d\latent \right)$$^1$\blfootnote{$^1$We reuse $p$ to denote the conditionals and marginals of the joint density.\vspace{-4mm}}. The difficulty in carrying out this optimization is that the log-likelihood function is defined via a generally intractable integral. To circumvent marginalization, a common approach \cite{kingma2013auto, rezende2014stochastic} is to optimize a variational lower bound on the marginal log-likelihood \cite{jordan1999introduction, beal2003variational}. The evidence lower bound $\ELBO(\obs, p, q)$ (ELBO) is the most common such bound and is defined by a variational posterior distribution $q(\latent | \obs)$ whose support includes $p$'s,
\begin{align}
\label{eq:elbointro}
    \ELBO(\obs, p, q) &= \expect_{q(\latent  | \obs)} \left[\log \frac{p(\obs , \latent)}{q(\latent | \obs)}\right] = \log p(\obs) - \kl{q(\latent | \obs)}{p(\latent | \obs)} \leq \log p(\obs)~.
\end{align}
$\ELBO(\obs, p, q)$ lower-bounds the marginal log-likelihood for any choice of $q$, and the bound is tight when $q$ is the true posterior $p(\latent|\obs)$.
Thus, the joint optimum of $\ELBO(\obs, p, q)$ in $p$ and $q$ is the MLE. In practice, it is common to restrict $q$ to a tractable family of distributions (e.g.,~a factored distribution) and to jointly optimize the ELBO over $p$ and $q$ with stochastic gradient ascent \cite{kingma2013auto, rezende2014stochastic, ranganath2014black, kucukelbir2016automatic}. Because of the KL penalty from $q$ to $p$, optimizing (\ref{eq:elbointro}) under these assumptions tends to force $p$'s posterior to satisfy the factorizing assumptions of the variational family which reduces the capacity of the model $p$. One strategy for addressing this is to decouple the tightness of the bound from the quality of $q$. For example, \mbox{\cite{burda2015importance}} observed that Eq. (\ref{eq:elbointro}) can be interpreted as the log of an unnormalized importance weight with the proposal given by $q$, and that using $N$ samples from the same proposal produces a tighter bound, known as the importance weighted auto-encoder bound, or IWAE.
 
 Indeed, it follows from Jensen's inequality that the log of \emph{any} unbiased positive Monte Carlo estimator of the marginal likelihood results in a lower bound that can be optimized for MLE. The filtering variational objectives (FIVOs) build on this idea by treating the log of a particle filter's likelihood estimator as an objective function. Following \cite{mnih2016variational}, we call objectives defined as log-transformed likelihood estimators Monte Carlo objectives (MCOs). In this work, we show that the tightness of an MCO scales like the relative variance of the estimator from which it is constructed. It is well-known that the variance of a particle filter's likelihood estimator scales more favourably than simple importance sampling for models with sequential structure \cite{cerou2011nonasymptotic, berard2014}. Thus, FIVO can potentially form a much tighter bound on the marginal log-likelihood than IWAE.

The main contributions of this work are introducing filtering variational objectives and a more careful study of Monte Carlo objectives. In Section \ref{sec:background}, we review maximum likelihood estimation via maximizing the ELBO. In Section \ref{sec:objectives}, we study Monte Carlo objectives and provide some of their basic properties. We define filtering variational objectives in Section \ref{sec:pvo}, discuss details of their optimization, and present a sharpness result. Finally, we cover related work and present experiments showing that  sequential models trained with FIVO outperform models trained with ELBO or IWAE in practice.

\section{Background}
\label{sec:background}
We briefly review techniques for optimizing the ELBO as a surrogate MLE objective. We restrict our focus to latent variable models in which the model $p_{\theta}(\obs, \latent)$ factors into tractable conditionals $p_{\theta}(\latent)$ and $p_{\theta}(\obs | \latent)$ that are parameterized differentiably by parameters $\theta$. MLE in these models is then the problem of optimizing $\log p_{\theta}(\obs)$ in $\theta$. The expectation-maximization (EM) algorithm is an approach to this problem which can be seen as coordinate ascent, fully maximizing $\ELBO(x, p_{\theta}, q)$ alternately in $q$ and $\theta$ at each iteration \cite{dempster1977maximum, wu1983convergence, neal1998view}. Yet, EM rarely applies in general, because maximizing over $q$ for a fixed $\theta$ corresponds to a generally intractable inference problem. 

Instead, an approach with mild assumptions on the model is to perform gradient ascent following a Monte Carlo estimator of the ELBO's gradient \cite{hoffman2013stochastic, ranganath2014black}. We assume that $q$ is taken from a family of distributions parameterized differentiably by parameters $\phi\eat{\in \R^m}$.  We can follow an unbiased estimator of the ELBO's gradient by sampling $\latent \sim q_{\phi}(\latent | \obs)$ and updating the parameters by $\theta^{\prime} = \theta + \eta \nabla_{\theta} \log p_{\theta}(\obs, \latent)$ and $\phi^{\prime} = \phi + \eta (\log p_{\theta}(\obs, \latent) - \log q_{\phi}( \latent | \obs)) \nabla_{\phi} \log q_{\phi}( \latent | \obs)$, where the gradients are computed conditional on the sample $\latent$ and $\eta$ is a learning rate. Such estimators follow the ELBO's gradient in expectation, but variance reduction techniques are usually necessary \cite{ranganath2014black, mnih2014neural, mnih2016variational}. 

A lower variance gradient estimator can be derived if $q_{\phi}$ is a reparameterizable distribution \cite{kingma2013auto, rezende2014stochastic, gal2016uncertainty}. Reparameterizable distributions are those that can be simulated by sampling from a distribution $\epsilon \sim d(\epsilon)$, which does not depend on $\phi$, and then applying a deterministic transformation $\latent = f_{\phi}(\obs, \epsilon)$. When $p_{\theta}$, $q_{\phi}$, and $f_{\phi}$ are differentiable, an unbiased estimator of the ELBO gradient consists of sampling $\epsilon$ and updating the parameter by $(\theta^{\prime}, \phi^{\prime}) = (\theta,\phi) + \eta\nabla_{(\theta,\phi)} (\log p_{\theta}(\obs, f_{\phi}(\obs, \epsilon)) - \log q_{\phi}(f_{\phi}(\obs, \epsilon) | \obs))$. Given $\epsilon$, the gradients of the sampling process can flow through $\latent = f_{\phi}(\obs, \epsilon)$.

Unfortunately, when the variational family of $q_{\phi}$ is restricted, following gradients of $-\kl{q_{\phi}(\latent | \obs)}{p_{\theta}(\latent | \obs)}$ tends to reduce the capacity of the model $p_{\theta}$ to match the assumptions of the variational family. This KL penalty can be ``removed'' by considering generalizations of the ELBO whose tightness can be controlled by means other than the closenesss of $p$ and $q$, e.g., \cite{burda2015importance}. We consider this in the next section.

\section{Monte Carlo Objectives (MCOs)}
\label{sec:objectives}
Monte Carlo objectives (MCOs) \cite{mnih2016variational} generalize the ELBO to objectives defined by taking the $\log$ of a positive, unbiased estimator of the marginal likelihood. The key property of MCOs is that they are lower bounds on the marginal log-likelihood, and thus can be used for MLE. Motivated by the previous section, we present results on the convergence of generic MCOs to the marginal log-likelihood and show that the tightness of an MCO is closely related to the variance of the estimator that defines it.

One can verify that the ELBO is a lower bound by using the concavity of log and Jensen's inequality,
\begin{align}
    \expect_{q(\latent  | \obs)} \left[\log \frac{p(\obs , \latent)}{q(\latent | \obs)}\right] ~\leq~ \log \int \frac{p(\obs , \latent)}{q(\latent | \obs)} q(\latent | \obs) \ d\latent ~=~ \log p(\obs).
\end{align}
This argument only relies only on unbiasedness of $p(\obs , \latent) / q(\latent|\obs)$ when $z \sim q(\latent | \obs)$. Thus, we can generalize this by considering any unbiased marginal likelihood estimator  $\hat{p}_N(\obs)$ and treating $\expect[\log \hat{p}_N(\obs)]$ as an objective function over models $p$. Here $N \in \mathbb{N}$ indexes the amount of computation needed to simulate $\hat{p}_N(\obs)$, e.g., the number of samples or particles. 
\begin{defn}
{Monte Carlo Objectives.}~
Let $\hat{p}_{N}(\obs)$ be an unbiased positive estimator of $p(\obs)$,  $\expect[\hat{p}_{N}(\obs)]= p(\obs)$, then the Monte Carlo objective $\MCO{N}(\obs, p)$ over $p \in \mathcal{P}$ defined by $\hat{p}_{N}(\obs)$ is
\begin{align}
    \MCO{N}(\obs, p) ~=~ \expect[\log \hat{p}_{N}(\obs)]
\end{align}
\end{defn}
For example, the ELBO is constructed from a single unnormalized importance weight $\hat{p}(\obs) = p(\obs , \latent) / q(\latent|\obs)$. The IWAE bound \cite{burda2015importance} takes $\hat{p}_{N}(\obs)$ to be $N$ averaged i.i.d. importance weights,
\begin{align}
    \IWAE{N}(x, p , q) ~=~ \expect_{q(\latent^i | \obs)}\left[\log \left(\frac{1}{N}\sum_{i=1}^N \frac{p(\obs , \latent^i)}{q(\latent^i|\obs)}\right)\right]
\end{align}
We consider additional examples in the Appendix. To avoid notational clutter, we omit the arguments to an MCO, e.g., the observations $\obs$ or model $p$, when the default arguments are clear from context. Whether we can compute stochastic gradients of $\MCO{N}$ efficiently depends on the specific form of the estimator and the underlying random variables that define it.

Many likelihood estimators $\hat{p}_N(\obs)$ converge to $p(\obs)$ almost surely as $N \to \infty$ (known as strong consistency). The advantage of a consistent estimator is that its MCO can be driven towards $\log p(\obs)$ by increasing $N$. We present sufficient conditions for this convergence and a description of the rate:
\begin{prop}
{\normalfont Properties of Monte Carlo Objectives.} \label{prop:mcos} Let $\MCO{N}(\obs, p)$ be a Monte Carlo objective defined by an unbiased positive estimator $\hat{p}_N(\obs)$ of $p(\obs)$. Then,
\begin{enumerate}[label=(\alph*)]
    \item\label{res:bound} (Bound) $\MCO{N}(\obs, p) \leq \log p(\obs)$.
    \item\label{res:consistency} (Consistency) If $\log \hat{p}_N(\obs)$ is uniformly integrable (see Appendix for definition) and $\hat{p}_N(\obs)$ is strongly consistent, then $\MCO{N}(\obs, p) \to \log p(\obs)$ as $N \to \infty$.
    \item\label{res:bias} (Asymptotic Bias) Let $g(N) = \expect[(\hat{p}_N(\obs) - p(\obs))^6]$ be the 6th central moment. If the 1st inverse moment is bounded, $\limsup_{N \to \infty} \expect[\hat{p}_N(\obs)^{-1}]<\infty$, then
    \begin{align}
        \log p(\obs) - \MCO{N}(\obs, p) = \frac{1}{2}\var\left(\frac{\hat{p}_N(\obs)}{p(\obs)}\right) + \mathcal{O}(\sqrt{g(N)}).
    \end{align}
\end{enumerate}
\end{prop}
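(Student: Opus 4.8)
The plan is to handle the three parts in increasing order of difficulty, with part \ref{res:bias} carrying essentially all of the work. Part \ref{res:bound} is immediate: by concavity of $\log$ and Jensen's inequality, $\MCO{N}(\obs, p) = \expect[\log \hat{p}_N(\obs)] \le \log \expect[\hat{p}_N(\obs)] = \log p(\obs)$, where the final equality is just unbiasedness. For part \ref{res:consistency}, strong consistency gives $\hat{p}_N(\obs) \to p(\obs)$ almost surely, and since $\log$ is continuous on $(0,\infty)$ this yields $\log \hat{p}_N(\obs) \to \log p(\obs)$ almost surely; uniform integrability of $\{\log \hat{p}_N(\obs)\}$ then upgrades almost-sure convergence to $L^1$ convergence (the classical Vitali-type fact that a.s.\ convergence plus uniform integrability implies $L^1$ convergence), and in particular $\expect[\log \hat{p}_N(\obs)] \to \log p(\obs)$. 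Nothing beyond this standard theorem is needed.

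Part \ref{res:bias} is the substantive claim. I would first normalize by setting $Z_N = \hat{p}_N(\obs)/p(\obs)$, so that $\expect[Z_N] = 1$ and $\log p(\obs) - \MCO{N}(\obs,p) = -\expect[\log Z_N]$. Writing $W_N = Z_N - 1$ (so $\expect[W_N] = 0$, $\expect[W_N^2] = \var(Z_N)$, and $W_N \ge -1$ since $\hat{p}_N \ge 0$), I would expand $\log(1+w)$ to second order with exact remainder $h(w) = \log(1+w) - w + \tfrac12 w^2$. Taking expectations gives the exact identity
\begin{align}
\log p(\obs) - \MCO{N}(\obs,p) = \tfrac12 \var(Z_N) - \expect[h(W_N)],
\end{align}
so the entire task reduces to showing $|\expect[h(W_N)]| = \mathcal{O}(\sqrt{g(N)})$, where I will use $\expect[W_N^6] = g(N)/p(\obs)^6$ throughout.

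To bound the remainder I would split on the sign of $W_N$, using $h(w) = \int_0^w s^2/(1+s)\,ds$ to obtain the clean pointwise bounds $0 \le h(w) \le \tfrac13 w^3$ for $w \ge 0$ and $|h(w)| \le \tfrac13 |w|^3/(1+w)$ for $-1 < w < 0$, the latter giving $|h(W_N)| \le \tfrac13 |W_N|^3/Z_N$. On $\{W_N \ge 0\}$ this yields $|\expect[h(W_N)\mathbf{1}_{\{W_N \ge 0\}}]| \le \tfrac13 \expect[|W_N|^3] \le \tfrac13\sqrt{\expect[W_N^6]}$ by Cauchy–Schwarz, which is exactly of order $\sqrt{g(N)}$. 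The region $\{W_N < 0\}$ is where the work lies, because $1/Z_N$ can be large; this is precisely what the first–inverse–moment hypothesis $\limsup_N \expect[\hat{p}_N^{-1}] < \infty$ (equivalently $\limsup_N \expect[Z_N^{-1}] < \infty$, since the two differ by the factor $p(\obs)$) controls. I would further split into $\{Z_N \ge 1/2\}$, where $1/Z_N \le 2$ and the term is again dominated by a multiple of $\expect[|W_N|^3] \le \sqrt{\expect[W_N^6]}$, and the dangerous set $\{Z_N < 1/2\}$, on which $|W_N| > 1/2$ so that Markov's inequality gives the tail bound $\Pr(Z_N < 1/2) \le \Pr(|W_N| > 1/2) \le 2^6\,\expect[W_N^6] = \mathcal{O}(g(N))$.

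The main obstacle is exactly this last set: controlling $\expect[h(W_N)\mathbf{1}_{\{Z_N < 1/2\}}]$, where the singularity of $\log$ at $\hat{p}_N = 0$ competes with the vanishing probability of the event. The crude estimate $\expect[Z_N^{-1}\mathbf{1}_{\{Z_N < 1/2\}}] \le \expect[Z_N^{-1}]$ only gives boundedness, not a vanishing rate, so the delicate point is to pair the $\mathcal{O}(g(N))$ tail probability against the bounded inverse moment at the right Hölder exponents so that the blow-up of $1/Z_N$ on this low-probability set is dominated and the contribution collapses to order $\sqrt{g(N)}$. Everything else—the Taylor identity, the sign split, and the reductions via $\expect[|W_N|^3] \le \sqrt{\expect[W_N^6]}$—is routine once this estimate is in place.
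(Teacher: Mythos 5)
Parts \ref{res:bound} and \ref{res:consistency} of your proposal coincide with the paper's proof (Jensen's inequality; continuity of $\log$ plus Vitali's convergence theorem under uniform integrability), and your setup for \ref{res:bias} --- normalizing to $Z_N = \hat{p}_N(\obs)/p(\obs)$ and the exact second-order Taylor identity with remainder $h(W_N)$ --- is also exactly the paper's. But your proof of \ref{res:bias} is not complete: you explicitly leave the estimate $\expect[|h(W_N)|\mathbf{1}_{\{Z_N<1/2\}}] = \mathcal{O}(\sqrt{g(N)})$ as an open obstacle, and the route you sketch for it cannot work as stated. Your pointwise bound $|h(W_N)| \le \tfrac13 |W_N|^3/Z_N$ carries a \emph{full} power of $Z_N^{-1}$, while the hypothesis controls only the first inverse moment; H\"older would then need $\expect[Z_N^{-r}]$ for some $r>1$ to trade against the tail probability $\proba(Z_N<1/2)=\mathcal{O}(g(N))$, and at $r=1$ the conjugate exponent is $\infty$, so the tail bound buys nothing. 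Worse, the quantity your bound reduces the problem to genuinely need not vanish: place an atom of $Z_N$ at $\epsilon_N = c\,\delta_N$ with mass $\delta_N \asymp g(N)$ (adjusting the remaining mass to keep $\expect[Z_N]=1$); then $\expect[Z_N^{-1}]$ stays bounded and $\expect[W_N^6]\asymp \delta_N$, yet $\expect[Z_N^{-1}\mathbf{1}_{\{Z_N<1/2\}}] = \delta_N/\epsilon_N = 1/c$, a constant. The proposition survives only because $h$ itself blows up logarithmically, not like $Z_N^{-1}$, as $Z_N \to 0$ --- information your pointwise bound has already discarded.

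The paper closes this with one further idea that also makes your sign/region splitting unnecessary: apply Cauchy--Schwarz \emph{inside} the integral remainder,
\begin{align*}
|h(w)| = \left|\int_0^w \frac{s^2}{1+s}\,ds\right| \le \left|\int_0^w \frac{ds}{(1+s)^2}\right|^{1/2}\left|\int_0^w s^4\,ds\right|^{1/2} = \left|\frac{1}{1+w}\right|^{1/2}\frac{|w|^3}{\sqrt{5}},
\end{align*}
so that only a \emph{half} power of the inverse appears; a single Cauchy--Schwarz in expectation then gives $\expect[|h(W_N)|] \le \expect[Z_N^{-1}]^{1/2}\left(\expect[W_N^6]/5\right)^{1/2} = \mathcal{O}(\sqrt{g(N)})$, uniformly over both signs of $W_N$, with no splitting at all. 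If you prefer to keep your decomposition, the dangerous set can be salvaged in the same spirit: on $\{Z_N < 1/2\}$ write $h(W_N) = \log Z_N - W_N + W_N^2/2$ with $|W_N|\le 1$, use $|\log z| \le 2z^{-1/2}$ on $(0,1)$, and then Cauchy--Schwarz gives $\expect[Z_N^{-1/2}\mathbf{1}_{\{Z_N<1/2\}}] \le \expect[Z_N^{-1}]^{1/2}\,\proba(Z_N<1/2)^{1/2} = \mathcal{O}(\sqrt{g(N)})$. The organizing principle you were missing is that a bounded first inverse moment combined with Cauchy--Schwarz controls exactly half powers of $Z_N^{-1}$, so every pointwise bound on the remainder must be arranged to carry at most a $Z_N^{-1/2}$ singularity.
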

\begin{proof}
See the Appendix for the proof and a sufficient condition for controlling the first inverse moment when $\hat{p}_N(\obs)$ is the average of i.i.d. random variables.
\end{proof}

In some cases, convergence of the bound to $\log p(\obs)$ is monotonic, e.g.,~ IWAE \cite{burda2015importance}, but this is not true in general. The relative variance of estimators, $\var(\hat{p}_N(\obs) / p(\obs))$, tends to be well studied, so property \ref{res:bias} gives us a tool for comparing the convergence rate of distinct MCOs. For example, \cite{cerou2011nonasymptotic, berard2014} study marginal likelihood estimators defined by particle filters and find that the relative variance of these estimators scales favorably in comparison to naive importance sampling. This suggests that a particle filter's MCO, introduced in the next section, will generally be a tighter bound than IWAE.

\section{Filtering Variational Objectives (FIVOs)}
\label{sec:pvo}

The filtering variational objectives (FIVOs) are a family of MCOs defined by the marginal likelihood estimator of a particle filter. For models with sequential structure, e.g.,~latent variable models of audio and text, the relative variance of a naive importance sampling estimator tends to scale exponentially in the number of steps. In contrast, the relative variance of particle filter estimators can scale more favorably with the number of steps---linearly in some cases \cite{cerou2011nonasymptotic, berard2014}. Thus, the results of Section \ref{sec:objectives} suggest that FIVOs can serve as tighter objectives than IWAE for MLE in sequential models.

Let our observations be sequences of $T$ $\obsspace$-valued random variables denoted $\allobs$, where $\obs_{i:j} \equiv (\obs_i, \ldots, \obs_j)$. We also assume that the data generation process relies on a sequence of $T$ unobserved $\latentspace$-valued latent variables denoted $\alllatents$. We focus on sequential latent variable models that factor as a series of tractable conditionals,
$p(\allobs, \alllatents) = p_1(\obs_1, \latent_1) \prod_{t=2}^T p_t(\obs_t, \latent_t | \obsseq{t-1}, \latentseq{t-1})$.

\begin{algorithm}[t]
\caption{Simulating $\FIVO{N}(\allobs, p, q)$}
\label{alg:FIVO}
\begin{varwidth}[t]{0.45\textwidth}        
    \begin{algorithmic}[1]
    \State {\bfseries\scshape Fivo}$(\allobs, p, q, N)$:
    \State $\{{w}_0^i\}_{i=1}^N = \{1/N\}_{i=1}^N$
    \For {$t \in \{1, \ldots, T\}$}
        \For {$i \in \{1, \ldots, N\}$}
            \State $\latent_{t}^i \sim q_t(\latent_t | \obsseq{t}, {\latent}_{1:t-1}^i)$
            \State $\latentseq{t}^i = $ {\bfseries\scshape concat}$({\latent}_{1:t-1}^i, \latent_t^i)$
        \EndFor
        \State $\hat{p}_t = \left(\sum_{i=1}^N {w}_{t-1}^i \alpha_t(\latentseq{t}^i) \right) $
        \State $\hat{p}_{N}(\obsseq{t}) = \hat{p}_{N}(\obsseq{t-1})\hat{p}_t$
        \State $\{w_t^i\}_{i=1}^N = \{{w}_{t-1}^i \alpha_t(\latentseq{t}^i) / \hat{p}_t\}_{i=1}^N$
    \algstore{fivo}
    \end{algorithmic}
\end{varwidth}
\begin{varwidth}[t]{0.55\textwidth}            
    \begin{algorithmic}[1]
    \algrestore{fivo}
        \If{resampling criteria satisfied by $\{w_t^i\}_{i=1}^N$}
            \State $\{{w}_t^i, {\latent}_{1:t}^i\}_{i=1}^N = $ {\bfseries\scshape rsamp}$(\{w_t^i, \latentseq{t}^i\}_{i=1}^N)$
        \EndIf
    \EndFor
    \State {\bfseries return} $\log \hat{p}_N(\allobs)$
\item[]
    \State {\bfseries\scshape rsamp}$(\{w^i, \latent^i\}_{i=1}^N)$:
    \For{$i \in \{1, \ldots, N\}$}
        \State $a \sim \Categorical(\{w^i\}_{i=1}^N)$
        \State $y^i = \latent^a$
    \EndFor
    \State {\bfseries return}  $\{\frac{1}{N}, {y}^i\}_{i=1}^N$
    \end{algorithmic}
\end{varwidth}
\end{algorithm}

A particle filter is a sequential Monte Carlo algorithm, which propagates a population of $N$ weighted particles for $T$ steps using a combination of importance sampling and resampling steps, see Alg. \ref{alg:FIVO}. In detail, the particle filter takes as arguments an observation $\allobs$, the number of particles $N$, the model distribution $p$, and a variational posterior $q(\alllatents | \allobs)$ factored over $t$, 
\begin{align}
    q(\alllatents | \allobs) = \prod_{t=1}^T q_t(\latent_{t} | \obsseq{t}, \latentseq{t-1})~.
\end{align}
The particle filter maintains a population  $\{{w}_{t-1}^i, \latentseq{t-1}^i\}_{i=1}^N$ of particles $\latentseq{t-1}^i$ with weights $w_{t-1}^i$. At step $t$, the filter independently proposes an extension $\latent_t^i \sim q_t(\latent_{t} | \obsseq{t},  \latentseq{t-1}^i)$ to each particle's trajectory $\latentseq{t-1}^i$. The weights $w_{t-1}^i$ are multiplied by the incremental importance weights,
\begin{align}
    \alpha_t(\latentseq{t}^i) = \frac{p_t(\obs_t, \latent_{t}^i | \obsseq{t-1}, \latentseq{t-1}^i)}{q_t(\latent_{t}^i | \obsseq{t}, \latentseq{t-1}^i)},
\end{align}
and renormalized. If the current weights $w_{t}^i$ satisfy a resampling criteria, then a resampling step is performed and $N$ particles $\latentseq{t}^i$ are sampled in proportion to their weights from the current population with replacement. Common resampling schemes include resampling at every step and resampling if the effective sample size (ESS) of the population $(\sum_{i=1}^N (w_t^i)^2)^{-1}$ drops below $N/2$ \cite{doucet2009tutorial}. After resampling the weights are reset to $1$. Otherwise, the particles  $\latentseq{t}^i$ are copied to the next step along with the accumulated weights. See Fig. \ref{fig:fivo} for a visualization.

Instead of viewing Alg. \ref{alg:FIVO} as an inference algorithm, we treat the quantity $\expect[\log \hat{p}_N(\allobs)]$ as an objective function over $p$. Because $\hat{p}_N(\allobs)$ is an unbiased estimator of $p(x_{1:T})$, proven in the Appendix and in \cite{del2004feynman, del2013mean, andrieu2010particle, pitt2012some}, it defines an MCO, which we call FIVO:
\begin{defn}
{\normalfont Filtering Variational Objectives.}
Let $\log \hat{p}_N(\allobs)$ be the output of Alg. \ref{alg:FIVO} with inputs $(x_{1:T}, p, q, N)$, then $\FIVO{N}(\allobs, p, q) = \expect[\log\hat{p}_N(\allobs)]$ is a filtering variational objective.
\end{defn}
$\hat{p}_N(\allobs)$ is a strongly consistent estimator \cite{del2004feynman, del2013mean}. So if $\log \hat{p}_N(\allobs)$ is uniformly integrable, then $\FIVO{N}(\allobs, p, q) \to \log p(\allobs)$ as $N \to \infty$. Resampling is the distinguishing feature of $\FIVO{N}$; if resampling is removed, then FIVO reduces to IWAE. Resampling does add an amount of immediate variance, but it allows the filter to discard low weight particles with high probability. This has the effect of refocusing the distribution of particles to regions of higher mass under the posterior, and in some sequential models can reduce the variance from exponential to linear in the number of time steps \cite{cerou2011nonasymptotic, berard2014}. Resampling is a greedy process, and it is possible that a particle discarded at step $t$, could have attained a high mass at step $T$. In practice, the best trade-off is to use adaptive resampling schemes \cite{doucet2009tutorial}. If for a given $\allobs, p, q$ a particle filter's likelihood estimator improves over simple importance sampling in terms of variance, we expect $\FIVO{N}$ to be a tighter bound than $\ELBO$ or  $\IWAE{N}$.

\subsection{Optimization}
\label{subsection:opt}

The FIVO bound can be optimized with the same stochastic gradient ascent framework used for the ELBO. We found in practice it was effective simply to follow a Monte Carlo estimator of the biased gradient $\expect[\nabla_{(\theta,\phi)} \log \hat{p}_N(\allobs)]$ with reparameterized $\latent_t^i$. This gradient estimator is biased, as the full FIVO gradient has three kinds of terms: it has the term $\expect[\nabla_{\theta,\phi} \log \hat{p}_N(\allobs)]$, where $\nabla_{\theta,\phi} \log \hat{p}_N(\allobs)$ is defined conditional on the random variables of Alg. \ref{alg:FIVO}; it has gradient terms for every distribution of Alg. \ref{alg:FIVO} that depends on the parameters; and, if adaptive resampling is used, then it has additional terms that account for the change in FIVO with respect to the decision to resample. In this section, we derive the FIVO gradient when $\latent_t^i$ are reparameterized and a fixed resampling schedule is followed. We derive the full gradient in the Appendix.

In more detail, we assume that $p$ and $q$ are parameterized in a differentiable way by $\theta$ and $\phi$. Assume that $q$ is from a reparameterizable family and that $\latent_t^i$ of Alg. \ref{alg:FIVO} are reparameterized. Assume that we use a fixed resampling schedule, and let $\indicate{\text{resampling at step } t}$ be an indicator function indicating whether a resampling occured at step $t$. Now, $\FIVO{N}$ depends on the parameters via $\log \hat{p}_N(\allobs)$ and the resampling probabilities $w_t^i$ in the density. Thus, $\nabla_{(\theta,\phi)}\FIVO{N} =$
\begin{align}
\label{eq:gradient}
\expect\left[\nabla_{(\theta,\phi)} \log \hat{p}_N(\allobs) + \sum\nolimits_{t=1}^T\sum\nolimits_{i=1}^N\indicate{\text{resampling at step } t}  \log \frac{\hat{p}_N(\allobs)}{\hat{p}_N(\obsseq{t})} \nabla_{(\theta, \phi)} \log w_t^i\right]
\end{align}
Given a single forward pass of Alg. \ref{alg:FIVO} with reparameterized $\latent_t^i$, the terms inside the expectation form a Monte Carlo estimator of Eq.~(\ref{eq:gradient}). However, the terms from resampling events contribute to the majority of the variance of the estimator. Thus, the gradient estimator that we found most effective in practice consists only of the gradient $\nabla_{(\theta, \phi)} \log \hat{p}_N(\allobs)$, the solid red arrows of Figure \ref{fig:fivo}. We explore this experimentally in Section \ref{section:biasedvunbiased}. 

\begin{figure*}[t!]
\centering
\begin{tabular}{l@{\hspace*{1cm}}c@{\hspace*{1cm}}r}
\scalebox{.75}{
\begin{tikzpicture}[darkstyle/.style={circle,draw,line width=1,fill=gray!25,minimum size=22.5}, lightstyle/.style={circle,draw,fill=gray!15,minimum size=22.5}]
  \foreach \x in {0,...,2}
    \foreach \y in {1,...,3} 
    {\pgfmathtruncatemacro{\time}{\x+1}
    \pgfmathtruncatemacro{\index}{3 - \y + 1}
    \ifthenelse{\(\x=0 \AND \y=2\) \OR \(\x=0 \AND \y=1\)}
        {\pgfmathtruncatemacro{\index}{\y + 1} \node [lightstyle]  (\x\y) at (1.25*\x, 1*\y) {$z_{\time}^{\index}$}}
       {\node [lightstyle]  (\x\y) at (1.25*\x, 1*\y) {$z_{\time}^{\index}$}};}
 \foreach \x in {3}  
	\foreach \y in {1,...,3} 
            {\pgfmathtruncatemacro{\time}{\x + 1}
            \pgfmathtruncatemacro{\index}{3 - \y + 1}
            \node  (\x\y) at (1.25*\x, 1*\y) {};}
\node (04) at (1.25*0, 1*4) {$\log\hat{p}_{1}$};
\foreach \x in {1,...,2}
    {\pgfmathtruncatemacro{\time}{\x+1}
    \node (\x4) at (1.25*\x, 1*4) {$\log\hat{p}_{\time}$};}
\graph{
(03) ->[thick, opacity=0.5] (13) ->[thick, opacity=0.5] (23) ->[thick, opacity=0.5] (33);
(02) ->[thick, opacity=0.5] (11) ->[thick, opacity=0.5] (21) ->[thick, opacity=0.5] (32);
(21) ->[thick, opacity=0.5] {(31)};
(01) ->[thick, opacity=0.5] (12) ->[thick, opacity=0.5] (22);
};
\node[align=center] at (2, 0) {\bf resample $\{z_{1:3}^i\}_{i=1}^{3} \sim w_3^i$};
\end{tikzpicture}
}
&
\scalebox{.75}{
\begin{tikzpicture}[darkstyle/.style={circle,draw,line width=1,fill=gray!25,minimum size=22.5}, lightstyle/.style={circle,draw,fill=gray!15,minimum size=22.5}]
  \foreach \x in {0,...,3}
    \foreach \y in {1,...,3} 
    {\pgfmathtruncatemacro{\time}{\x+1}
    \pgfmathtruncatemacro{\index}{3 - \y + 1}
    \ifthenelse{\(\x=0 \AND \y=2\) \OR \(\x=1 \AND \y=1\) \OR \(\x=2 \AND \y=1\) \OR \(\x=3 \AND \y=2\)}
        {\node [darkstyle]  (\x\y) at (1.25*\x, 1*\y) {$\mathbf{z}_{\time}^{2}$}}
       {\node [lightstyle]  (\x\y) at (1.25*\x, 1*\y) {}};}
\node (04) at (1.25*0, 1*4) {$\log\hat{p}_{1}$};
\foreach \x in {1,...,3}
    {\pgfmathtruncatemacro{\time}{\x+1}
    \node (\x4) at (1.25*\x, 1*4) {$\log\hat{p}_{\time}$};}
\graph{
(03) ->[thick, opacity=0.5] (13) ->[thick, opacity=0.5] (23) ->[thick, opacity=0.5] (33);
(02) ->[line width = 1] (11) ->[line width = 1] (21) ->[line width = 1] (32);
(21) ->[thick, opacity=0.5] {(31)};
(01) ->[thick, opacity=0.5] (12) ->[thick, opacity=0.5] (22);
};
\node[align=center] at (2,0) {\bf propose $z_4^i \sim q_4(z_4 | x_{1:4}, z_{1:3}^i)$};
\end{tikzpicture}
}
&
\scalebox{.75}{
\begin{tikzpicture}[darkstyle/.style={circle,draw,line width=1,fill=gray!15, minimum size=22.5}, lightstyle/.style={circle,draw,fill=gray!15, minimum size=22.5}]
  \foreach \x in {0,...,3}
    \foreach \y in {1,...,3} 
    {\pgfmathtruncatemacro{\time}{\x+1}
    \pgfmathtruncatemacro{\index}{3 - \y + 1}
    \ifthenelse{\(\x=0 \AND \y=2\) \OR \(\x=1 \AND \y=1\) \OR \(\x=2 \AND \y=1\) \OR \(\x=3 \AND \y=2\)}
        {\node [lightstyle]  (\x\y) at (1.25*\x, 1*\y) {${z}_{\time}^{2}$}}
       {\node [lightstyle]  (\x\y) at (1.25*\x, 1*\y) {}};}
\node (34) at (1.25*4, 1*4) {$\nabla \log\hat{p}_{4}$};
\node (24) at (1.25*2, 1*4) {$\log\hat{p}_{4} \nabla \log w_3^i$};
\graph{
(03) <-[thick, red] (13) <-[thick, red] (23) <-[thick, red] (33);
(02) <-[thick,red] (11) <-[thick, red] (21) <-[thick, red] {(32), (31)};
(34) ->[thick, red, bend left] {(33), (32),  (31)};
(01) <-[line width=1, dotted, blue] (12) <-[line width=1, dotted, blue] (22);
(24) ->[line width=1, dotted, blue, bend left = 45] {(23), (22),  (21)};
(02) <-[line width=1, dotted, blue, bend left = 15] (11) <-[line width=1, dotted, blue, bend left = 15] (21);
(03) <-[line width=1, dotted, blue, bend left = 15] (13) <-[line width=1, dotted, blue, bend left = 15] (23);
};
\node[align=center] at (2, 0) {\bf gradients};
\end{tikzpicture}
}
\end{tabular}
\caption{Visualizing FIVO; (Left) Resample from particle trajectories to determine inheritance in next step, (middle) propose with $q_t$ and accumulate loss $\log \hat{p}_t$, (right) gradients (in the reparameterized case) flow through the lattice, objective gradients in solid red and resampling gradients in dotted blue.}
\label{fig:fivo}
\vspace{-5mm}
\end{figure*}
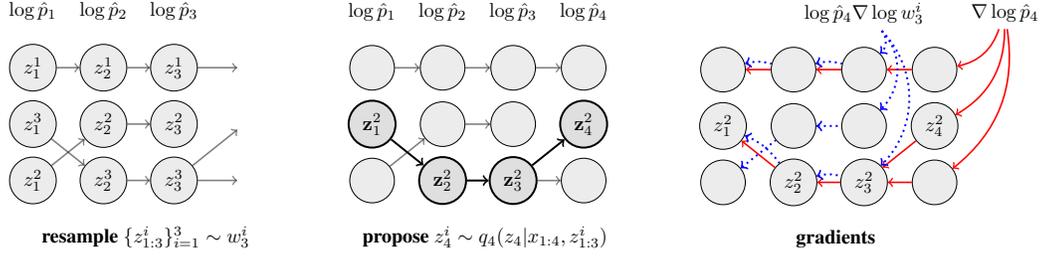

\subsection{Sharpness}
\label{subsection:sharpness}
As with the ELBO, FIVO is a variational objective taking a variational posterior $q$ as an argument. An important question is whether FIVO achieves the marginal log-likelihood at its optimal $q$. We can only guarantee this for models in which $\latentseq{t-1}$ and $\obs_t$ are independent given $\obsseq{t-1}$.
\begin{prop}
{\normalfont Sharpness of Filtering Variational Objectives.} \label{prop:FIVOs} Let $\FIVO{N}(\allobs, p, q)$ be a FIVO, and $q^*(\allobs, p) = \argmax_q \FIVO{N}(\allobs, p, q)$. If $p$ has independence structure such that $p(\latentseq{t-1} | \obsseq{t}) = p(\latentseq{t-1} | \obsseq{t-1})$ for $t\in \{2, \ldots, T\}$, then
\begin{align*}    
q^*(\allobs, p)(\alllatents) = p(\alllatents|\allobs) \quad &\text{and} \quad \FIVO{N}(\allobs, p, q^*(\allobs, p)) = \log p(\allobs)~.
\end{align*}
\end{prop}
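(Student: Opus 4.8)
The plan is to leverage Proposition~\ref{prop:mcos}\ref{res:bound}, which guarantees $\FIVO{N}(\allobs, p, q) \leq \log p(\allobs)$ for every $q$. Consequently it suffices to exhibit a single member $q$ of the variational family at which the bound is attained with equality; that $q$ is then a maximizer and the value of $\FIVO{N}$ there is $\log p(\allobs)$. The natural candidate is the \emph{locally optimal proposal} $q_t(\latent_t \mid \obsseq{t}, \latentseq{t-1}) = p(\latent_t \mid \obsseq{t}, \latentseq{t-1})$, which is admissible precisely because it conditions only on $(\obsseq{t}, \latentseq{t-1})$, exactly the variables the filter is permitted to use at step $t$.

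First I would show that, under the stated independence structure, this candidate coincides with the smoothing posterior. Factoring each model term as $p_t(\obs_t, \latent_t \mid \obsseq{t-1}, \latentseq{t-1}) = p(\obs_t \mid \obsseq{t-1}, \latentseq{t-1})\, p(\latent_t \mid \obsseq{t}, \latentseq{t-1})$ and multiplying over $t$ gives $p(\allobs, \alllatents) = \bigl(\prod_{t=1}^T p(\obs_t \mid \obsseq{t-1}, \latentseq{t-1})\bigr)\bigl(\prod_{t=1}^T q_t\bigr)$. The hypothesis $p(\latentseq{t-1} \mid \obsseq{t}) = p(\latentseq{t-1} \mid \obsseq{t-1})$ is, by Bayes' rule, equivalent to $p(\obs_t \mid \obsseq{t-1}, \latentseq{t-1}) = p(\obs_t \mid \obsseq{t-1})$, so the first product telescopes into $\prod_{t=1}^T p(\obs_t \mid \obsseq{t-1}) = p(\allobs)$. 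Dividing by $p(\allobs)$ then yields $\prod_{t=1}^T q_t = p(\alllatents \mid \allobs)$, establishing that this choice of $q$ is the true posterior.

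Next I would show the particle filter's estimator becomes degenerate under this $q$. The incremental weights collapse to $\alpha_t(\latentseq{t}^i) = p(\obs_t \mid \obsseq{t-1}, \latentseq{t-1}^i) = p(\obs_t \mid \obsseq{t-1})$, which by the independence hypothesis no longer depends on the particle index $i$. Since the normalized weights satisfy $\sum_{i=1}^N w_{t-1}^i = 1$ at every step, this forces $\hat{p}_t = \sum_{i=1}^N w_{t-1}^i \alpha_t(\latentseq{t}^i) = p(\obs_t \mid \obsseq{t-1})$ and leaves the weights unchanged, $w_t^i = w_{t-1}^i$, so that resampling (whenever triggered) merely permutes identical weights and is harmless. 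Telescoping the product in Alg.~\ref{alg:FIVO} gives $\hat{p}_N(\allobs) = \prod_{t=1}^T p(\obs_t \mid \obsseq{t-1}) = p(\allobs)$ almost surely, whence $\FIVO{N}(\allobs, p, q) = \expect[\log \hat{p}_N(\allobs)] = \log p(\allobs)$. Combined with the upper bound, this $q$ is optimal, proving both assertions.

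I expect the main obstacle to be the normalizer-cancellation step: verifying that $\prod_{t=1}^T p(\obs_t \mid \obsseq{t-1}, \latentseq{t-1})$ collapses to $p(\allobs)$ is exactly where the independence hypothesis is indispensable, and it is what rules out the general smoothing case, where the locally optimal proposal $p(\latent_t \mid \obsseq{t}, \latentseq{t-1})$ differs from the posterior conditional $p(\latent_t \mid \allobs, \latentseq{t-1})$. A secondary subtlety worth checking is that resampling does not destroy the exactness of $\hat{p}_N$; this is handled by the observation that deterministic incremental weights keep the normalized weights uniform throughout, so every resampling event acts trivially.
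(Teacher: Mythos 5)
Your proof is correct and takes essentially the same route as the paper's: both exhibit the candidate $q_t(\latent_t \mid \obsseq{t}, \latentseq{t-1}) = p(\latent_t \mid \obsseq{t}, \latentseq{t-1})$, show that under the independence hypothesis the incremental weights collapse to the particle-independent constant $p(\obs_t \mid \obsseq{t-1}) = p(\obsseq{t})/p(\obsseq{t-1})$ so the weights stay uniform, resampling acts trivially, and $\hat{p}_N(\allobs)$ telescopes deterministically to $p(\allobs)$, after which the MCO upper bound from Proposition \ref{prop:mcos}\ref{res:bound} yields optimality. If anything yours is slightly more complete: the paper's appendix proof never explicitly verifies the first claim that $\prod_{t=1}^T q_t = p(\alllatents \mid \allobs)$, whereas your Bayes-rule reformulation of the hypothesis as $p(\obs_t \mid \obsseq{t-1}, \latentseq{t-1}) = p(\obs_t \mid \obsseq{t-1})$ handles it directly via the telescoping normalizer.
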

\begin{proof}
See Appendix.
\end{proof}
Most models do not satisfy this assumption, and deriving the optimal $q$ in general is complicated by the resampling dynamics. For the restricted the model class in Proposition \ref{prop:FIVOs}, the optimal $q_t$ does not condition on future observations $x_{t+1:T}$. We explored this experimentally with richer models in Section \ref{subsection:experimental-sharpness}, and found that allowing $q_t$ to condition on $x_{t+1:T}$ does not reliably improve FIVO. This is consistent with the view of resampling as a greedy process that responds to each intermediate distribution as if it were the final. Still, we found that the impact of this effect was outweighed by the advantage of optimizing a tighter bound.

\section{Related Work}
\label{sec:related}

The marginal log-likelihood is a central quantity in statistics and probability, and there has long been an interest in bounding it \cite{wainwright2008graphical}. The literature relating to the bounds we call Monte Carlo objectives has typically focused on the problem of estimating the marginal likelihood itself. \cite{grosse2015sandwiching, burda2015accurate} use Jensen's inequality in a forward and reverse estimator to detect the failure of inference methods. IWAE \cite{burda2015importance} is a clear influence on this work, and FIVO can be seen as an extension of this bound. The ELBO enjoys a long history \cite{jordan1999introduction} and there have been efforts to improve the ELBO itself. \cite{ranganath2016operator} generalize the ELBO by considering arbitrary operators of the model and variational posterior. More closely related to this work is a body of work improving the ELBO by increasing the expressiveness of the  variational posterior. For example, \cite{rezende2015variational, kingma2016improved} augment the variational posterior with deterministic transformations with fixed Jacobians, and \cite{salimans2015markov} extend the variational posterior to admit a Markov chain.

 Other approaches to learning in neural latent variable models include \cite{bornschein2014reweighted}, who use importance sampling to approximate gradients under the posterior, and \cite{gu2015neural}, who use sequential Monte Carlo to approximate gradients under the posterior. These are distinct from our contribution in the sense that for them inference for the sake of estimation is the ultimate goal. To our knowledge the idea of treating the output of inference as an objective in and of itself, while not completely novel, has not been fully appreciated in the literature. Although, this idea shares inspiration with methods that optimize the convergence of Markov chains \cite{bengio2013generalized}. 
 
 We note that the idea to optimize the log estimator of a particle filter was independently and concurrently considered in \cite{naesseth2017variational, le2017auto}. In \cite{naesseth2017variational} the bound we call FIVO is cast as a tractable lower bound on the ELBO defined by the particle filter's non-parameteric approximation to the posterior. \cite{le2017auto} additionally derive an expression for FIVO's bias as the KL between the filter's distribution and a certain target process. Our work is distinguished by our study of the convergence of MCOs in $N$, which includes FIVO, our investigation of FIVO sharpness, and our experimental results on stochastic RNNs.

\section{Experiments}
\label{sec:experiments}

In our experiments, we sought to: (a) compare models trained with ELBO, IWAE, and FIVO bounds in terms of final test log-likelihoods, (b) explore the effect of the resampling gradient terms on FIVO, (c) investigate how the lack of sharpness affects FIVO, and (d) consider how models trained with FIVO use the stochastic state. To explore these questions, we trained variational recurrent neural networks (VRNN)  \cite{chung2015recurrent} with the ELBO, IWAE, and FIVO bounds using TensorFlow \cite{abadi2016tensorflow} on two benchmark sequential modeling tasks: natural speech waveforms and polyphonic music. These datasets are known to be difficult to model without stochastic latent states \cite{fraccaro2016sequential}.

The VRNN is a sequential latent variable model that combines a deterministic recurrent neural network (RNN) with stochastic latent states $\latent_t$ at each step. The observation distribution over $\obs_t$ is conditioned directly on $\latent_t$ and indirectly on $\latentseq{t-1}$ via the RNN's state $h_t(\latent_{t-1}, \obs_{t-1}, h_{t-1})$. For a length $T$ sequence, the model's posterior factors into the conditionals
$\prod_{t=1}^T p_t(\latent_t | h_t(\latent_{t-1}, \obs_{t-1},h_{t-1}))g_t(\obs_t | \latent_{t}, h_t(\latent_{t-1}, \obs_{t-1},h_{t-1}))$, and the variational posterior factors as
$\prod_{t=1}^T q_t(\latent_t | h_t(\latent_{t-1}, \obs_{t-1},h_{t-1}), \obs_t)$. All distributions over latent variables are factorized Gaussians, and the output distributions $g_t$ depend on the dataset. The RNN is a single-layer LSTM and the conditionals are parameterized by fully connected neural networks with one hidden layer of the same size as the LSTM hidden layer. We used the residual parameterization \cite{fraccaro2016sequential} for the variational posterior.
 
For FIVO we resampled when the ESS of the particles dropped below $N/2$. For FIVO and IWAE we used a batch size of 4, and for the ELBO, we used batch sizes of $4N$ to match computational budgets (resampling is $\mathcal{O}(N)$ with the alias method). For all models we report bounds using the variational posterior trained jointly with the model. For models trained with FIVO we report $\FIVO{128}$. To provide strong baselines, we report the maximum across bounds, $\max \{\ELBO, \IWAE{128}, \FIVO{128}\}$, for models trained with ELBO and IWAE. Additional details in the Appendix.

\begin{table}[t]
\begin{center}
{\footnotesize
\begin{tabular}{@{}c l c c c c@{}}
$N$ & Bound & Nottingham & JSB & MuseData & Piano-midi.de\\
\toprule
 \multirow{3}{*}{4} & ELBO & -3.00             & -8.60             & -7.15             & -7.81 \\
                    & IWAE & -2.75             & -7.86             & -7.20             & -7.86 \\
                    & FIVO & {\bfseries -2.68} & {\bfseries -6.90} & {\bfseries -6.20} & {\bfseries -7.76} \\
 \midrule
 \multirow{3}{*}{8} & ELBO & -3.01              & -8.61             & -7.19             & -7.83 \\
                    & IWAE & -2.90              & -7.40             & -7.15             & -7.84 \\
                    & FIVO & {\bfseries -2.77}  & {\bfseries -6.79} & {\bfseries -6.12} & {\bfseries -7.45} \\
 \midrule
 \multirow{3}{*}{16} & ELBO & -3.02             & -8.63              & -7.18             & -7.85 \\
                     & IWAE & -2.85             & -7.41              & -7.13             & -7.79 \\
                     & FIVO & {\bfseries -2.58} & {\bfseries -6.72}  & {\bfseries -5.89} & {\bfseries -7.43} \\
\bottomrule
\end{tabular}}
\hspace{1em}
{\footnotesize
\begin{tabular}{@{}c l r r@{}}
& & \multicolumn{2}{c}{TIMIT} \\
\cmidrule(r){3-4}
$N$ & Bound & 64 units & 256 units\\
\toprule
 \multirow{3}{*}{4} 
 & ELBO & 0 & 10,438 \\
 & IWAE & -160 & 11,054 \\
 & FIVO  & {\bfseries 5,691} & {\bfseries 17,822} \\
 \midrule
 \multirow{3}{*}{8} 
 & ELBO & 2,771 & 9,819\\
 & IWAE & 3,977 & 11,623 \\
 & FIVO & {\bfseries 6,023} & {\bfseries 21,449} \\
 \midrule
 \multirow{3}{*}{16} 
 & ELBO & 1,676 & 9,918\\
 & IWAE & 3,236 & 13,069 \\
 & FIVO & {\bfseries 8,630} & {\bfseries 21,536} \\
 \bottomrule
\end{tabular}}
\end{center}
\caption{Test set marginal log-likelihood bounds for models trained with ELBO, IWAE, and FIVO. For ELBO and IWAE models, we report $\max\{\ELBO, \IWAE{128},\FIVO{128}\}$. For FIVO models, we report $\FIVO{128}$. Pianoroll results are in nats per timestep, TIMIT results are in nats per sequence relative to ELBO with $N=4$. For details on our evaluation methodology and absolute numbers see the Appendix.}
 \label{table:numbers}
\vspace{-\baselineskip}
\end{table}

\subsection{Polyphonic Music}
\label{subsection:experimental-polyphonic}
We evaluated VRNNs trained with the ELBO, IWAE, and FIVO bounds on 4 polyphonic music datasets: the Nottingham folk tunes, the JSB chorales, the MuseData library of classical piano and orchestral music, and the Piano-midi.de MIDI archive \cite{boulanger2012modeling}. Each dataset is split into standard train, valid, and test sets and is represented as a sequence of 88-dimensional binary vectors denoting the notes active at the current timestep. We mean-centered the input data and modeled the output as a set of 88 factorized Bernoulli variables. We used 64 units for the RNN hidden state and latent state size for all polyphonic music models except for JSB chorales models, which used 32 units. We report bounds on average log-likelihood per timestep in Table \ref{table:numbers}. Models trained with the FIVO bound significantly outperformed models trained with either the ELBO or the IWAE bounds on all four datasets. In some cases, the improvements exceeded $1$ nat \emph{per timestep}, and in all cases optimizing FIVO with $N=4$ outperformed optimizing IWAE or ELBO for $N=\{4,8,16\}$.

\subsection{Speech}
\label{subsection:experimental-speech}
The TIMIT dataset is a standard benchmark for sequential models that contains 6300 utterances with an average duration of 3.1 seconds spoken by 630 different speakers. The 6300 utterances are divided into a training set of size 4620 and a test set of size 1680. We further divided the training set into a validation set of size 231 and a training set of size 4389, with the splits exactly as in \cite{fraccaro2016sequential}.
Each TIMIT utterance is represented as a sequence of real-valued amplitudes which we  split into a sequence of 200-dimensional frames, as in \cite{chung2015recurrent,fraccaro2016sequential}. Data preprocessing was limited to mean centering and variance normalization as in \cite{fraccaro2016sequential}. For TIMIT, the output distribution was a factorized Gaussian, and we report the average log-likelihood bound per sequence relative to models trained with ELBO. Again, models trained with FIVO significantly outperformed models trained with IWAE or ELBO, see Table \ref{table:numbers}.

\subsection{Resampling Gradients}
\label{section:biasedvunbiased}
All models in this work (except those in this section) were trained with gradients that did not include the term in Eq. (\ref{eq:gradient}) that comes from  resampling steps. We omitted this term because it has an outsized effect on gradient variance, often increasing it by 6 orders of magnitude. To explore the effects of this term experimentally, we trained VRNNs with and without the resampling gradient term on the TIMIT and polyphonic music datasets. When using the resampling term, we attempted to control its variance using a moving-average baseline linear in the number of timesteps. For all datasets, models trained without the resampling gradient term outperformed models trained with the term by a large margin on both the training set and held-out data. Many runs with resampling gradients failed to improve beyond random initialization. A representative pair of train log-likelihood curves is shown in Figure \ref{fig:graphs} --- gradients without the resampling term led to earlier convergence and a better solution. We stress that this is an empirical result --- in principle biased gradients can lead to divergent behaviour. We leave exploring strategies to reduce the variance of the unbiased estimator to future work.

\subsection{Sharpness}
\label{subsection:experimental-sharpness}
FIVO does not achieve the marginal log-likelihood at its optimal variational posterior $q^*$, because the optimal $q^*$ does not condition on future observations (see Section \ref{subsection:sharpness}). In contrast, ELBO and IWAE are sharp, and their $q^*$s depend on future observations. To investigate the effects of this, we defined a smoothing variant of the VRNN in which $q$ takes as additional input the hidden state of a deterministic RNN run backwards over the observations, allowing $q$ to condition on future observations. We trained smoothing VRNNs using ELBO, IWAE, and FIVO, and report evaluation on the training set (to isolate the effect on optimization performance) in Table \ref{table:smoothing} . Smoothing helped models trained with IWAE, but not enough to outperform models trained with FIVO. As expected, smoothing did not reliably improve models trained with FIVO. Test set performance was similar, see the Appendix for details.

\begin{figure*}[t]

\hspace{-5pt}
    \centering
    \begin{subfigure}[t]{0.5\textwidth}
        \centering
        \includegraphics[scale=0.39]{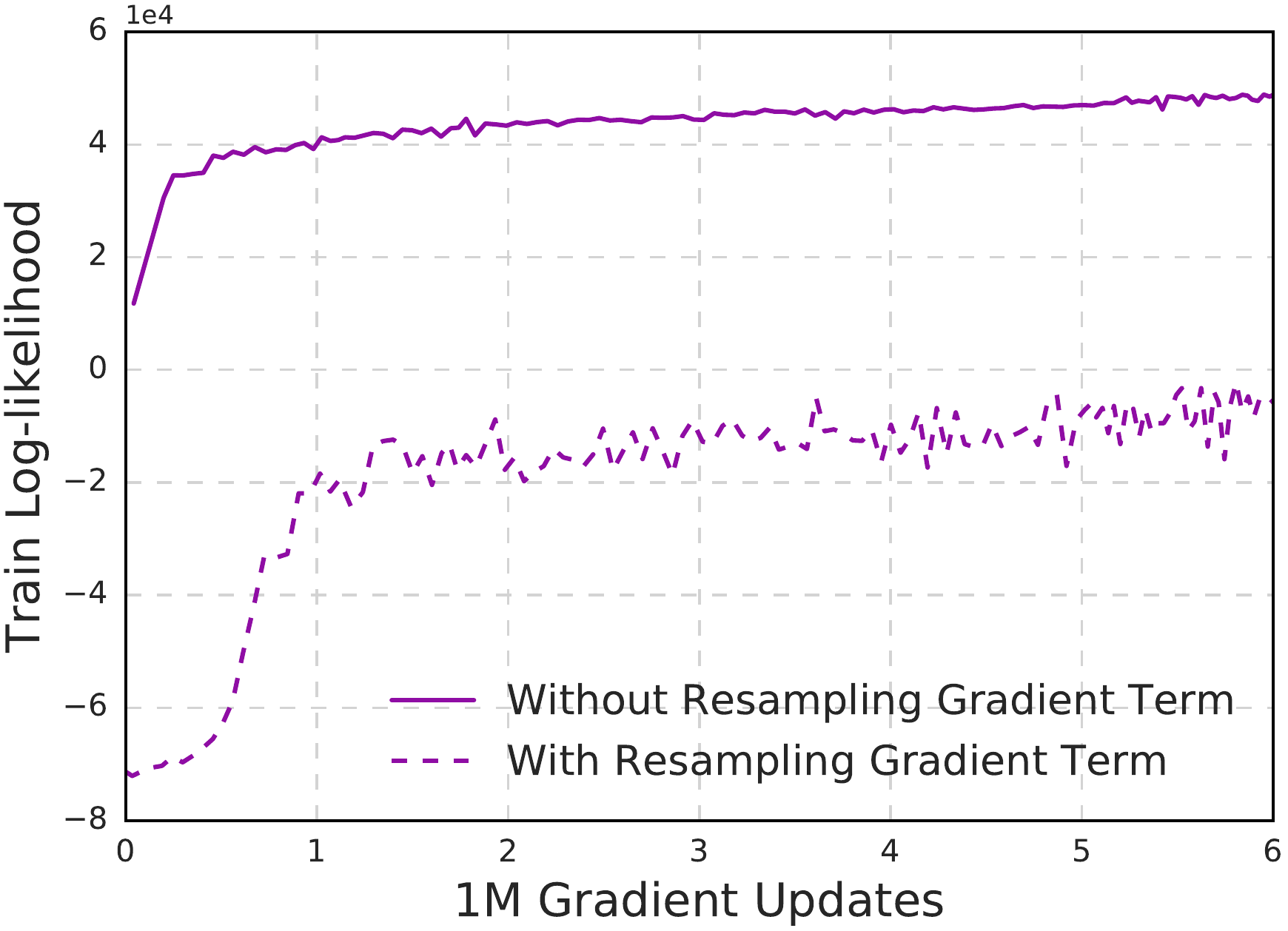}
        \label{fig:biased-grads}
    \end{subfigure}%
    ~ \hspace{-3pt}
    \begin{subfigure}[t]{0.5\textwidth}
        \centering
    \includegraphics[scale=0.39]{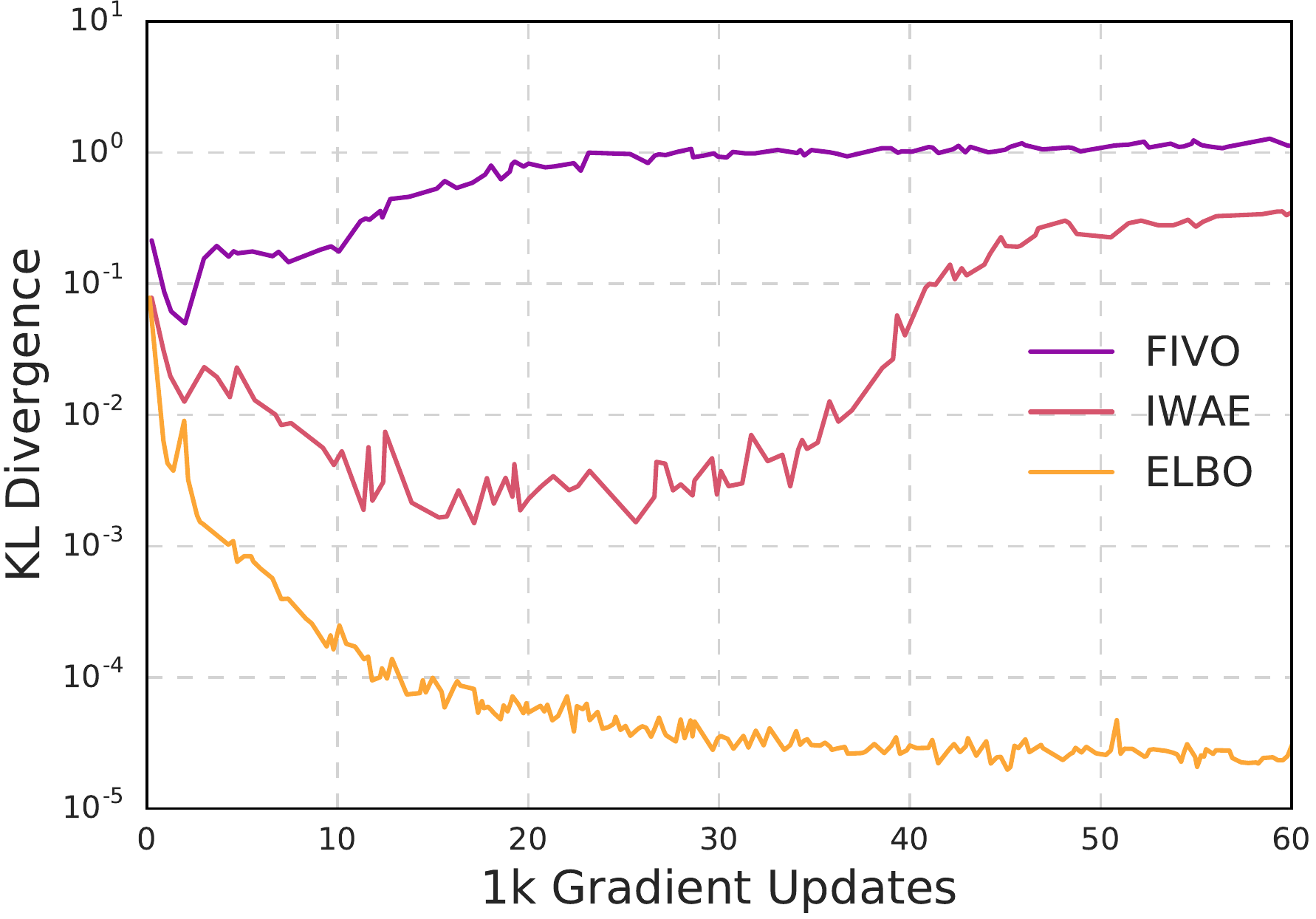}
    \end{subfigure}
    \caption{(Left) Graph of $\FIVO{128}$ over training comparing models trained with and without the resampling gradient terms on TIMIT with $N=4$. (Right) KL divergence from $q(\alllatents| \allobs)$ to $p(\alllatents)$ for models trained  on the JSB chorales with $N=16$.}
        \label{fig:graphs}
\end{figure*}
\begin{table}[t]
\begin{center}
{\footnotesize
\begin{tabular}{@{}l c c c c r@{}}
Bound & Nottingham & JSB & MuseData & Piano-midi.de & TIMIT\\
\toprule
 ELBO   &  {\bfseries -2.40} &  {\bfseries -5.48} & -6.54 &  {\bfseries -6.68} &  {\bfseries 0} \\
 ELBO+s & -2.59 & -5.53 &  {\bfseries -6.48} & -6.77 & -925 \\
 \midrule
 IWAE   &-2.52 & -5.77 & -6.54 &  {\bfseries -6.74} & 1,469 \\
 IWAE+s &  {\bfseries -2.37} &  {\bfseries -4.63} &  {\bfseries -6.47} &  {\bfseries -6.74} &  {\bfseries 2,630} \\
 \midrule
FIVO   & {\bfseries -2.29} & -4.08 & {\bfseries -5.80} & -6.41 & 6,991 \\
 FIVO+s & -2.34 & {\bfseries -3.83} & -5.87 & {\bfseries -6.34} & {\bfseries 9,773} \\
 \bottomrule
\end{tabular}}
\end{center}
\caption{Train set marginal log-likelihood bounds for models comparing smoothing (+s) and non-smoothing variational posteriors.  We report $\max\{\ELBO, \IWAE{128},\FIVO{128}\}$ for ELBO and IWAE models and $\FIVO{128}$ for FIVO models. All models were trained with $N=4$.  Pianoroll results are in nats per timestep, TIMIT results are in nats per sequence relative to non-smoothing ELBO. For details on our evaluation methodology and absolute numbers see the Appendix.}
 \label{table:smoothing}
\vspace{-\baselineskip}
\end{table}

\subsection{Use of Stochastic State}
A known pathology when training stochastic latent variable models with the ELBO is that stochastic states can go unused. Empirically, this is associated with the collapse of variational posterior $q(\latent | \obs)$ network to the model prior $p(\latent)$ \cite{bowman2015generating}. To investigate this, we plot the KL divergence from $q(\latentseq{T} | \obsseq{T})$ to $p(\latentseq{T})$ averaged over the dataset (Figure~\ref{fig:graphs}). Indeed, the KL of models trained with ELBO collapsed during training, whereas the KL of models trained with FIVO remained high, even while achieving a higher log-likelihood bound.

\section{Conclusions}
\label{sec:conclusion}

We introduced the family of filtering variational objectives, a class of lower bounds on the log marginal likelihood that extend the evidence lower bound. FIVOs are suited for MLE in neural latent variable models. We trained models with the ELBO, IWAE, and FIVO bounds and found that the models trained with FIVO significantly outperformed other models across four polyphonic music modeling tasks and a speech waveform modeling task. Future work will include exploring control variates for the resampling gradients, FIVOs defined by more sophisticated filtering algorithms, and new MCOs based on differentiable operators like leapfrog operators with deterministically annealed temperatures. In general, we hope that this paper inspires the machine learning community to take a fresh look at the literature of marginal likelihood estimators---seeing them as objectives instead of algorithms for inference.

\subsubsection*{Acknowledgments}
We thank Matt Hoffman, Matt Johnson, Danilo J. Rezende, Jascha Sohl-Dickstein, and Theophane Weber for helpful discussions and support in this project. A.\ Doucet was partially supported by the EPSRC grant EP/K000276/1.  Y.\ W.\ Teh's research leading to these results has received funding from the European Research Council under the European Union's Seventh Framework Programme (FP7/2007-2013) ERC grant agreement no.\ 617071.

\bibliography{refs}

\begin{thebibliography}{10}

\bibitem{kingma2013auto}
Diederik~P Kingma and Max Welling.
\newblock Auto-encoding variational {B}ayes.
\newblock {\em ICLR}, 2014.

\bibitem{rezende2014stochastic}
Danilo~Jimenez Rezende, Shakir Mohamed, and Daan Wierstra.
\newblock Stochastic backpropagation and approximate inference in deep
  generative models.
\newblock {\em ICML}, 2014.

\bibitem{dinh2016density}
Laurent Dinh, Jascha Sohl-Dickstein, and Samy Bengio.
\newblock Density estimation using real nvp.
\newblock {\em arXiv preprint arXiv:1605.08803}, 2016.

\bibitem{goodfellow2014generative}
Ian Goodfellow, Jean Pouget-Abadie, Mehdi Mirza, Bing Xu, David Warde-Farley,
  Sherjil Ozair, Aaron Courville, and Yoshua Bengio.
\newblock Generative adversarial nets.
\newblock In {\em NIPS}, 2014.

\bibitem{nowozin2016f}
Sebastian Nowozin, Botond Cseke, and Ryota Tomioka.
\newblock f-gan: Training generative neural samplers using variational
  divergence minimization.
\newblock {\em arXiv preprint arXiv:1606.00709}, 2016.

\bibitem{tran2017deep}
Dustin Tran, Rajesh Ranganath, and David~M Blei.
\newblock Deep and hierarchical implicit models.
\newblock {\em arXiv preprint arXiv:1702.08896}, 2017.

\bibitem{mohamed2016learning}
Shakir Mohamed and Balaji Lakshminarayanan.
\newblock Learning in implicit generative models.
\newblock {\em arXiv preprint arXiv:1610.03483}, 2016.

\bibitem{jordan1999introduction}
Michael~I Jordan, Zoubin Ghahramani, Tommi~S Jaakkola, and Lawrence~K Saul.
\newblock An introduction to variational methods for graphical models.
\newblock {\em Machine learning}, 37(2):183--233, 1999.

\bibitem{beal2003variational}
Matthew~J. Beal.
\newblock {\em Variational algorithms for approximate {B}ayesian inference}.
\newblock 2003.

\bibitem{ranganath2014black}
Rajesh Ranganath, Sean Gerrish, and David Blei.
\newblock Black box variational inference.
\newblock In {\em AISTATS}, 2014.

\bibitem{kucukelbir2016automatic}
Alp Kucukelbir, Dustin Tran, Rajesh Ranganath, Andrew Gelman, and David~M Blei.
\newblock Automatic differentiation variational inference.
\newblock {\em arXiv preprint arXiv:1603.00788}, 2016.

\bibitem{burda2015importance}
Yuri Burda, Roger Grosse, and Ruslan Salakhutdinov.
\newblock Importance weighted autoencoders.
\newblock {\em ICLR}, 2016.

\bibitem{mnih2016variational}
Andriy Mnih and Danilo~J Rezende.
\newblock Variational inference for {M}onte {C}arlo objectives.
\newblock {\em arXiv preprint arXiv:1602.06725}, 2016.

\bibitem{cerou2011nonasymptotic}
Fr{\'e}d{\'e}ric C{\'e}rou, Pierre Del~Moral, and Arnaud Guyader.
\newblock A nonasymptotic theorem for unnormalized {F}eynman--{K}ac particle
  models.
\newblock {\em Ann. Inst. H. Poincar{\'e} B}, 47(3):629--649, 2011.

\bibitem{berard2014}
Jean B{\'e}rard, Pierre Del~Moral, and Arnaud Doucet.
\newblock A lognormal central limit theorem for particle approximations of
  normalizing constants.
\newblock {\em Electron. J. Probab.}, 19(94):1--28, 2014.

\bibitem{dempster1977maximum}
Arthur~P Dempster, Nan~M Laird, and Donald~B Rubin.
\newblock Maximum likelihood from incomplete data via the {EM} algorithm.
\newblock {\em J. R. Stat. Soc. Ser. B Stat. Methodol.}, pages 1--38, 1977.

\bibitem{wu1983convergence}
CF~Jeff Wu.
\newblock On the convergence properties of the {EM} algorithm.
\newblock {\em Ann. Stat.}, pages 95--103, 1983.

\bibitem{neal1998view}
Radford~M Neal and Geoffrey~E Hinton.
\newblock A view of the {EM} algorithm that justifies incremental, sparse, and
  other variants.
\newblock In {\em Learning in graphical models}, pages 355--368. Springer,
  1998.

\bibitem{hoffman2013stochastic}
Matthew~D Hoffman, David~M Blei, Chong Wang, and John~William Paisley.
\newblock Stochastic variational inference.
\newblock {\em Journal of Machine Learning Research}, 14(1):1303--1347, 2013.

\bibitem{mnih2014neural}
Andriy Mnih and Karol Gregor.
\newblock Neural variational inference and learning in belief networks.
\newblock {\em arXiv preprint arXiv:1402.0030}, 2014.

\bibitem{gal2016uncertainty}
Yarin Gal.
\newblock {\em Uncertainty in Deep Learning}.
\newblock PhD thesis, University of Cambridge, 2016.

\bibitem{doucet2009tutorial}
Arnaud Doucet and Adam~M. Johansen.
\newblock A tutorial on particle filtering and smoothing: Fifteen years later.
\newblock In D.~Crisan and B.~Rozovsky, editors, {\em The Oxford Handbook of
  Nonlinear Filtering}, pages 656--704. Oxford University Press, 2011.

\bibitem{del2004feynman}
Pierre Del~Moral.
\newblock {\em {F}eynman-{K}ac formulae: genealogical and interacting particle
  systems with applications}.
\newblock Springer Verlag, 2004.

\bibitem{del2013mean}
Pierre Del~Moral.
\newblock {\em Mean field simulation for {M}onte {C}arlo integration}.
\newblock CRC Press, 2013.

\bibitem{andrieu2010particle}
Christophe Andrieu, Arnaud Doucet, and Roman Holenstein.
\newblock Particle {M}arkov chain {M}onte {C}arlo methods.
\newblock {\em J. R. Stat. Soc. Ser. B Stat. Methodol.}, 72(3):269--342, 2010.

\bibitem{pitt2012some}
Michael~K Pitt, Ralph dos Santos~Silva, Paolo Giordani, and Robert Kohn.
\newblock On some properties of {M}arkov chain {M}onte {C}arlo simulation
  methods based on the particle filter.
\newblock {\em J. Econometrics}, 171(2):134--151, 2012.

\bibitem{wainwright2008graphical}
Martin~J Wainwright, Michael~I Jordan, et~al.
\newblock Graphical models, exponential families, and variational inference.
\newblock {\em Foundations and Trends in Machine Learning}, 1(1--2):1--305,
  2008.

\bibitem{grosse2015sandwiching}
Roger~B Grosse, Zoubin Ghahramani, and Ryan~P Adams.
\newblock Sandwiching the marginal likelihood using bidirectional {M}onte
  {C}arlo.
\newblock {\em arXiv preprint arXiv:1511.02543}, 2015.

\bibitem{burda2015accurate}
Yuri Burda, Roger Grosse, and Ruslan Salakhutdinov.
\newblock Accurate and conservative estimates of {MRF} log-likelihood using
  reverse annealing.
\newblock In {\em AISTATS}, 2015.

\bibitem{ranganath2016operator}
Rajesh Ranganath, Dustin Tran, Jaan Altosaar, and David Blei.
\newblock Operator variational inference.
\newblock In {\em NIPS}, 2016.

\bibitem{rezende2015variational}
Danilo~Jimenez Rezende and Shakir Mohamed.
\newblock Variational inference with normalizing flows.
\newblock {\em ICML}, 2015.

\bibitem{kingma2016improved}
Diederik~P Kingma, Tim Salimans, Rafal Jozefowicz, Xi~Chen, Ilya Sutskever, and
  Max Welling.
\newblock Improved variational inference with inverse autoregressive flow.
\newblock In {\em NIPS}, 2016.

\bibitem{salimans2015markov}
Tim Salimans, Diederik Kingma, and Max Welling.
\newblock Markov chain {M}onte {C}arlo and variational inference: Bridging the
  gap.
\newblock In {\em ICML}, 2015.

\bibitem{bornschein2014reweighted}
J{\"o}rg Bornschein and Yoshua Bengio.
\newblock Reweighted wake-sleep.
\newblock {\em ICLR}, 2015.

\bibitem{gu2015neural}
Shixiang Gu, Zoubin Ghahramani, and Richard~E Turner.
\newblock Neural adaptive sequential {M}onte {C}arlo.
\newblock In {\em NIPS}, 2015.

\bibitem{bengio2013generalized}
Yoshua Bengio, Li~Yao, Guillaume Alain, and Pascal Vincent.
\newblock Generalized denoising auto-encoders as generative models.
\newblock In {\em NIPS}, 2013.

\bibitem{naesseth2017variational}
Christian~A Naesseth, Scott~W Linderman, Rajesh Ranganath, and David~M Blei.
\newblock Variational sequential {M}onte {C}arlo.
\newblock {\em arXiv preprint arXiv:1705.11140}, 2017.

\bibitem{le2017auto}
Tuan~Anh Le, Maximilian Igl, Tom Jin, Tom Rainforth, and Frank Wood.
\newblock Auto-encoding sequential {M}onte {C}arlo.
\newblock {\em arXiv preprint arXiv:1705.10306}, 2017.

\bibitem{chung2015recurrent}
Junyoung Chung, Kyle Kastner, Laurent Dinh, Kratarth Goel, Aaron~C Courville,
  and Yoshua Bengio.
\newblock A recurrent latent variable model for sequential data.
\newblock In {\em NIPS}, 2015.

\bibitem{abadi2016tensorflow}
Mart{\'\i}n Abadi, Ashish Agarwal, Paul Barham, Eugene Brevdo, Zhifeng Chen,
  Craig Citro, Greg~S Corrado, Andy Davis, Jeffrey Dean, Matthieu Devin, et~al.
\newblock Tensorflow: Large-scale machine learning on heterogeneous distributed
  systems.
\newblock {\em arXiv preprint arXiv:1603.04467}, 2016.

\bibitem{fraccaro2016sequential}
Marco Fraccaro, S{\o}ren~Kaae S{\o}nderby, Ulrich Paquet, and Ole Winther.
\newblock Sequential neural models with stochastic layers.
\newblock In {\em NIPS}, 2016.

\bibitem{boulanger2012modeling}
Nicolas Boulanger-Lewandowski, Yoshua Bengio, and Pascal Vincent.
\newblock Modeling temporal dependencies in high-dimensional sequences:
  Application to polyphonic music generation and transcription.
\newblock {\em ICML}, 2012.

\bibitem{bowman2015generating}
Samuel~R Bowman, Luke Vilnis, Oriol Vinyals, Andrew~M Dai, Rafal Jozefowicz,
  and Samy Bengio.
\newblock Generating sentences from a continuous space.
\newblock {\em arXiv preprint arXiv:1511.06349}, 2015.

\bibitem{veach1995optimally}
Eric Veach and Leonidas~J Guibas.
\newblock Optimally combining sampling techniques for {M}onte {C}arlo
  rendering.
\newblock In {\em Proceedings of the 22nd annual conference on Computer
  graphics and interactive techniques}, pages 419--428. ACM, 1995.

\bibitem{chib1995marginal}
Siddhartha Chib.
\newblock Marginal likelihood from the {G}ibbs output.
\newblock {\em Journal of the American Statistical Association},
  90(432):1313--1321, 1995.

\bibitem{meng1996simulating}
Xiao-Li Meng and Wing~Hung Wong.
\newblock Simulating ratios of normalizing constants via a simple identity: a
  theoretical exploration.
\newblock {\em Statistica Sinica}, pages 831--860, 1996.

\bibitem{gelman1998simulating}
Andrew Gelman and Xiao-Li Meng.
\newblock Simulating normalizing constants: From importance sampling to bridge
  sampling to path sampling.
\newblock {\em Statistical science}, pages 163--185, 1998.

\bibitem{neal2001annealed}
Radford~M Neal.
\newblock Annealed importance sampling.
\newblock {\em Statistics and computing}, 11(2):125--139, 2001.

\bibitem{neal2005linked}
Radford~M Neal.
\newblock Estimating ratios of normalizing constants using linked importance
  sampling.
\newblock {\em arXiv preprint math/0511216}, 2005.

\bibitem{skilling2006nested}
John Skilling.
\newblock Nested sampling for general {B}ayesian computation.
\newblock {\em Bayesian analysis}, 1(4):833--859, 2006.

\bibitem{elvira2015generalized}
V{\'\i}ctor Elvira, Luca Martino, David Luengo, and M{\'o}nica~F Bugallo.
\newblock Generalized multiple importance sampling.
\newblock {\em arXiv preprint arXiv:1511.03095}, 2015.

\bibitem{DelMoral2006}
Pierre Del~Moral, Arnaud Doucet, and Ajay Jasra.
\newblock Sequential {M}onte {C}arlo samplers.
\newblock {\em J. R. Stat. Soc. Ser. B Stat. Methodol.}, 68(3):411--436, 2006.

\bibitem{glorot2010understanding}
Xavier Glorot and Yoshua Bengio.
\newblock Understanding the difficulty of training deep feedforward neural
  networks.
\newblock In {\em AISTATS}, 2010.

\bibitem{kingma2014adam}
Diederik Kingma and Jimmy Ba.
\newblock Adam: A method for stochastic optimization.
\newblock {\em ICLR}, 2015.

\end{thebibliography}
\bibliographystyle{unsrt}

\newpage
\section*{Appendix to Filtering Variational Objectives}
\subsection*{Other Examples of MCOs.}
There is an extensive literature on marginal likelihood estimators \cite{veach1995optimally, chib1995marginal,meng1996simulating, gelman1998simulating, neal2001annealed, neal2005linked, skilling2006nested}. Each defines an MCO, and we consider two in more detail, annealed importance sampling \cite{neal2001annealed} and multiple importance sampling \cite{veach1995optimally, elvira2015generalized}. Let $\obs$ denote an observation of an $\mathcal{X}$-valued random variable generated in a process with an unobserved $\mathcal{Z}$-valued random variable $\latent$. Let $p(\obs, \latent)$ be the joint density.

\paragraph{Annealed Importance Sampling MCO.}
Annealed importance sampling (AIS) is a generalization of importance sampling \cite{neal2001annealed}. We present an MCO derived from a special case of the AIS algorithm. Let $q(\latent | \obs)$ be a variational posterior distribution and let $\beta_i$ be a sequence of real numbers for $i \in \{1, \ldots, N+1\}$ such that $0 \leq \beta_i \leq 1$ and $\beta_1 = 0$ and $\beta_{N+1} = 1$. Let $T_i(\latent^{\prime} | \latent, \obs)$ be a Markov transition distribution whose stationary distribution is proportional to $q(\latent | \obs)^{1-\beta_i}p(\obs, \latent)^{\beta_i}$. Then for $\latent_1 \sim q(\latent | \obs)$ and $\latent_i \sim T_i(\latent^{\prime} | \latent_{i-1}, \obs)$ for $i \in \{2, \dots, N\}$ we have the following unbiased estimator,
\begin{align}
\expect[\hat{p}_N(x)] = \expect\left[\prod_{i=1}^N \left( \frac{p(\obs, \latent_i)}{q(\latent_i | \obs)}\right)^{\beta_{i+1} - \beta_{i}}\right] = p(\obs)
\end{align}
Notice two things. First, there is no assumption that the states $\latent_i$ are at equilibrium, and second, we did not require a transition operator keeping $p(\obs, \latent)$ as an invariant distribution. All together, we can define the AIS MCO,
\begin{align}
\AIS{N}(x, q, \{T_i\}_{i=2}^N, p) = \expect\left[\sum_{i=1}^N (\beta_{i+1} - \beta_{i})\log\frac{p(\obs, \latent_i)}{q(\latent_i | \obs)}\right]
\end{align}
This is a sharp objective, if we take $q$ as the true posterior, $q(\latent | \obs) = p(\latent | \obs)$, and $T_i(\latent^{\prime} | \latent, \obs) = \delta(\latent^{\prime} - \latent)$ to be the Dirac delta copy operator. The difficulty in applying this MCO is finding $T_i$, which are scalable and easy to optimize. Generalizations of the AIS procedure have been proposed in \cite{DelMoral2006}. The resulting Sequential Monte Carlo samplers procedures also provide an unbiased estimator of the marginal likelihood and are structurally identical to the particle algorithm presented in this paper.

\paragraph{Multiple Importance Sampling MCO.} Multiple importance sampling (MIS) \cite{veach1995optimally} is another generalization of importance sampling. Let $q_i(\latent | \obs)$ be $N$ possibly distinct variational posterior distributions and $w_i(\obs) \geq 0$ be such that $\sum_{i=1}^N w_i(\obs) = 1$. There are a variety of distinct estimators that could be formed from the $q_i$ \cite{elvira2015generalized}. We present just one. Let  $\latent_i \sim q_i(\latent | \obs)$, then we have the following unbiased estimator
\begin{align}
\expect[\hat{p}_N(x)] = \expect\left[\sum_{i=1}^N \frac{w_i(\obs) p(\obs, \latent_i)}{\sum_{j=1}^N w_j(\obs) q_j(\latent_i | \obs)}\right] = p(\obs)
\end{align}
Notice that the latent sample $\latent_i \sim q_i(\latent | \obs)$ is evaluated under all $q_i$'s. One can view this as a Rao-Blackwellized estimator corresponding to the mixture distribution $\sum_{i=1}^N w_i(\obs) q_i(\latent |  \obs)$. All together,
\begin{align}
\MIS{N}(x, \{q_i\}_{i=1}^N, \{w_i\}_{i=1}^N, p) = \expect\left[\log \left(\sum_{i=1}^N \frac{ w_i(\obs) p(\obs, \latent_i)}{\sum_{j=1}^N w_j(\obs) q_j(\latent_i | \obs)} \right)\right] 
\end{align}
Again, this objective is sharp, if we take any $q_i(\latent | \obs) = p(\latent | \obs)$ and $w_i(\obs) = 1$. The difficulty in making this objective more useful is optimizing it in a way that distinguishes the $q_i$ and assigns the appropriate $w_i$.

\subsection*{Proof of Proposition 1.} 
Let $\expect[\hat{p}_N(x)] = p(x)$ and define $\mathcal{L}_N(x,p) = \mathbb{E}[\log \hat{p}_N(x)]$ as the Monte Carlo objective defined by $\hat{p}_N(x)$.
\begin{enumerate}[label=(\alph*)]
\item By the concavity of $\log$ and Jensen's inequality,
\begin{align*}
\mathcal{L}_N(x,p) = \mathbb{E}[\log \hat{p}_N(x)] \leq \log \mathbb{E}[ \hat{p}_N(x)] = \log p(x)
\end{align*}
\item Assume
\begin{itemize}
\item $\hat{p}_N(x)$ is strongly consistent, i.e. $\hat{p}_N(x) \overset{a.s.}{\longrightarrow} p(x)$ as $N \to \infty$.
\item $\log \hat{p}_N(x)$ is uniformly integrable. That is, let $(\Omega, \mathcal{F}, \mu)$ be the probability space on which $\log \hat{p}_N(x)$ is defined. The random variables $\{\log \hat{p}_N(x)\}_{N=1}^{\infty}$ are uniformly integrable if $\expect[|\log \hat{p}_N(x) |] < \infty$ and if for any $\epsilon > 0$, there exists $\delta > 0$, such that for all $N$ and $E \in \mathcal{F}$, $\mu(E) < \delta$ implies $\expect[|\log \hat{p}_N(x)| \indicate{E}] < \epsilon$, where $\indicate{E}$ is an indicator function of the set $E$.
\end{itemize}
Then by continuity of $\log$, $\log \hat{p}_N(x)$ converges almost surely to $\log p(x)$. By Vitali's convergence theorem (using the uniform integrability assumption), we get $\mathcal{L}_N(x,p) = \expect[\log \hat{p}_N(x)]  \to \log p(x)$ as $N\to\infty$.
\item Let $g(N) = \mathbb{E}[( \hat{p}_N(x) - p(x))^6]$, and assume $\limsup_{N \to \infty} \mathbb{E}[( \hat{p}_N(x))^{-1}] < \infty$. Define the relative error
\begin{align}
\Delta = \frac{\hat{p}_N(x) - p(x)}{p(x)}
\end{align}
Then the bias $\log p(x) - \mathcal{L}_N(x,p) = - \mathbb{E}[\log(1 + \Delta)]$. Now, Taylor expand $\log(1 + \Delta)$ about 0,
\begin{align}
\log(1 + \Delta) &= \Delta - \frac{1}{2}\Delta^2 + \int_{0}^{\Delta} \left(\frac{1}{1+x} - 1 + x\right) \ dx\\
&= \Delta - \frac{1}{2}\Delta^2 + \int_{0}^{\Delta} \left(\frac{x^2}{1+x}\right) \ dx
\end{align}
and in expectation
\begin{align}
- \mathbb{E}[\log(1 + \Delta)] &= \frac{1}{2}\Delta^2 - \mathbb{E}\left[\int_{0}^{\Delta} \left(\frac{x^2}{1+x}\right) \ dx\right]
\end{align}
Our aim is to show
\begin{align}
\left| \mathbb{E}\left[\int_{0}^{\Delta} \frac{x^2}{1+x} \ dx\right]\right| \in \mathcal{O}(g(N)^{1/2})
\end{align}
In particular,  by Cauchy-Schwarz
\begin{align}
\left| \mathbb{E}\left[\int_{0}^{\Delta} \left(\frac{x^2}{1+x}\right) \ dx\right]\right| &\leq \mathbb{E}\left[\left|\int_{0}^{\Delta} \frac{1}{(1+x)^2} \ dx\right|^{1/2} \left|\int_{0}^{\Delta} x^4 \ dx\right|^{1/2}\right]\\
&=\mathbb{E}\left[\left| \frac{\Delta}{1+ \Delta}\right|^{1/2} \left|\frac{\Delta^5}{5} \right|^{1/2}\right]\\
&=\mathbb{E}\left[\left| \frac{1}{1+ \Delta}\right|^{1/2} \left|\frac{\Delta^6}{5} \right|^{1/2}\right]\\
\intertext{and again by Cauchy-Schwarz}
&\leq \left(\mathbb{E}\left[\left| \frac{1}{1+ \Delta}\right|\right]\right)^{1/2}  \left(\mathbb{E}\left[\frac{\Delta^6}{5}\right]\right)^{1/2}.
\end{align}
This concludes the proof.
\end{enumerate}

\subsection*{Controlling the first inverse moment.} 
We provide a sufficient condition that guarantees that the inverse moment of the average of i.i.d. random variables is bounded, a condition used in Proposition \ref{prop:mcos} (c). Intuitively, this is a fairly weak condition, because it only requires that the mass in an arbitrarily small neighbourhood of zero is bounded.

\begin{lem} Let $w_i$ be i.i.d. positive random variables and $\hat{p}_N(x) = \frac{1}{N} \sum_{i=1}^N w_i$. If there exist $M, C, \epsilon > 0$ such that $\proba(w_i < w) \leq Cw^{1+\epsilon}$ for $w \in [0, M)$, then $\expect[\hat{p}_N(x)^{-1}] \leq C\frac{M^{\epsilon}}{\epsilon} + \frac{1}{M}$.
\end{lem}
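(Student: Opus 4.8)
The plan is to reduce the general-$N$ bound to the single-sample case $N=1$, where the hypothesis applies directly, by proving that the inverse moment $f(N) = \expect[\hat{p}_N(x)^{-1}]$ is non-increasing in $N$. The key observation is that the claimed constant $C M^\epsilon/\epsilon + 1/M$ is exactly what the $N=1$ case produces, which strongly signals that a monotonicity reduction is the intended route; a ``direct'' attack (bounding $\proba(\hat{p}_N < y) \le \proba(w_1 < Ny)^N$ and integrating) introduces spurious factors of $N$ and does not reproduce the clean, $N$-independent constant.

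First I would dispatch the base case. For any positive random variable $Y$, the tail formula together with the substitution $w = 1/t$ gives $\expect[Y^{-1}] = \int_0^\infty \proba(Y^{-1} > t)\,dt = \int_0^\infty \proba(Y < w)\,w^{-2}\,dw$. Applying this to $Y = w_1$, splitting the integral at $M$, and bounding the lower piece by the hypothesis $\proba(w_1 < w) \le C w^{1+\epsilon}$ and the upper piece by $\proba(w_1 < w) \le 1$, I get
\begin{align*}
\expect[w_1^{-1}] \;\le\; \int_0^M C w^{\epsilon-1}\,dw + \int_M^\infty w^{-2}\,dw \;=\; C\frac{M^\epsilon}{\epsilon} + \frac{1}{M},
\end{align*}
which is $f(1)$ and matches the target constant.

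The core of the argument is the monotonicity $f(N+1) \le f(N)$. Writing $S_{N+1} = \sum_{i=1}^{N+1} w_i$ and $S^{(i)} = S_{N+1} - w_i$, each $S^{(i)}$ is a sum of $N$ i.i.d.\ copies of $w$, so $\expect[N/S^{(i)}] = f(N)$ by exchangeability. It therefore suffices to prove the pointwise inequality $\frac{N+1}{S_{N+1}} \le \frac{1}{N+1}\sum_{i=1}^{N+1}\frac{N}{S^{(i)}}$, since taking expectations then yields $f(N+1) \le f(N)$. Putting $b_i = S^{(i)}$ — which is positive almost surely because the $w_j$ are positive — one has $\sum_i b_i = N S_{N+1}$, and the AM--HM inequality gives $\sum_i b_i^{-1} \ge (N+1)^2/\sum_i b_i = (N+1)^2/(N S_{N+1})$, which is exactly the required inequality. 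Chaining gives $f(N) \le f(N-1) \le \cdots \le f(1)$, and the base-case bound finishes the proof.

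I expect the monotonicity step to be the main obstacle: the work is in recognizing the leave-one-out decomposition and that exchangeability turns each leave-one-out sum back into a genuine $N$-sample average, so that the entire claim collapses onto a single application of AM--HM. Everything else — the base case integral and the chaining — is routine once that reduction is in place.
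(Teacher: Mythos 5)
Your proof is correct, and its reduction step is genuinely different from the paper's. The base case $N=1$ is identical: the same tail-formula integral $\int_0^\infty \mathbb{P}(w_1 < w)\,w^{-2}\,dw$, split at $M$, giving $C M^{\epsilon}/\epsilon + 1/M$. For the reduction, the paper proves $\mathbb{E}[\hat{p}_N(x)^{-1}] \le \mathbb{E}[\hat{p}_1(x)^{-1}]$ in one shot: AM--GM gives $\hat{p}_N(x)^{-1} \le \bigl(\prod_{i=1}^N w_i\bigr)^{-1/N}$, independence factors the expectation into $\mathbb{E}\bigl[w_1^{-1/N}\bigr]^N$, and Lyapunov's inequality closes the gap to $\mathbb{E}[w_1^{-1}]$. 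You instead prove the stronger statement that $f(N) = \mathbb{E}[\hat{p}_N(x)^{-1}]$ is non-increasing in $N$, via the leave-one-out sums $S^{(i)} = S_{N+1} - w_i$ (which satisfy $\sum_i S^{(i)} = N S_{N+1}$) and a pointwise AM--HM inequality, then chain down to $f(1)$; I verified the pointwise step, which is exactly $\sum_i (S^{(i)})^{-1} \ge (N+1)^2/(N S_{N+1})$, and it is sound. Your route buys two things the paper's does not: full monotonicity $f(1) \ge f(2) \ge \cdots$ rather than just $f(N) \le f(1)$, and weaker hypotheses---the leave-one-out argument uses only exchangeability (each $S^{(i)}$ is then distributed as an $N$-term sum), whereas the paper's factorization $\mathbb{E}\bigl[\prod_i w_i^{-1/N}\bigr] = \prod_i \mathbb{E}\bigl[w_i^{-1/N}\bigr]$ requires genuine independence. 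The paper's argument is marginally shorter and avoids the induction. One cosmetic remark: the $f(N)$ are a priori only elements of $[0,\infty]$, but since all integrands are nonnegative your inequality $f(N+1) \le f(N)$ holds in the extended sense, and finiteness propagates backward from $f(1) \le C M^{\epsilon}/\epsilon + 1/M$, as your chaining implicitly uses.
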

\begin{proof}
Let $M, C, \epsilon > 0$ be such that $\proba(w_i < w) \leq Cw^{1+\epsilon}$ for $w \in [0, M)$. We proceed in two cases. If $N = 1$, then
\begin{align*}
\expect[\hat{p}_N(x)^{-1}] &= \int_{0}^{\infty} \proba(w_1^{-1} > u) \ du\\
&= \int_{0}^{\infty} \proba(w_1 < 1/u) \ du\\
&= \int_{0}^{M} \frac{\proba(w_1 < w)}{w^2} \ dw +  \int_{M}^{\infty} \frac{\proba(w_1 < w)}{w^2} \ dw\\
&\leq \int_{0}^{M} \frac{Cw^{1+\epsilon}}{w^2} \ dw +  \int_{M}^{\infty} \frac{1}{w^2} \ dw\\
&= C\frac{M^{\epsilon}}{\epsilon} + \frac{1}{M}
\end{align*}
For $N > 1$, we show that $\expect[\hat{p}_N(x)^{-1}] \leq \expect[\hat{p}_1(x)^{-1}]$, so the same condition is sufficient for any $N$. The AM-GM inequality tells us that
\begin{align*}
\sum_{i=1}^N \frac{w_i}{N} &\geq \left(\prod_{i=1}^N w_i\right)^{1/N}
\intertext{so}
\expect[\hat{p}_N(x)^{-1}] &\leq \expect\left[\left(\prod_{i=1}^N w_i\right)^{-1/N}\right]\\
 &= \prod_{i=1}^N \expect\left[ w_i^{-1/N}\right]\\
 &= \expect\left[ w_1^{-1/N}\right]^N\\
 \intertext{and by Lyapunov's inequality, we have}
 &\leq \expect\left[ \left(w_1^{-1/N}\right)^N\right] = \expect[\hat{p}_1(x)^{-1}]
\end{align*}
This concludes the proof.
\end{proof}

\subsection*{Unbiasedness of $\hat{p}_N(\allobs)$ from the particle filter.} 
We sketch an argument that the random variable $\hat{p}_N(\allobs)$ defined by Algorithm \ref{alg:FIVO} is an unbiased estimator of the marginal likelihood $p(\allobs)$. This is a well-known fact \cite{del2004feynman, DelMoral2006, andrieu2010particle, pitt2012some}, and our sketch is based on \cite{andrieu2010particle}. The strategy is to cast the particle filter's estimator $\hat{p}_N(\allobs)$ as a single importance weight over an extended space. The lack of bias in the particle filter therefore reduces to the unbiasedness of importance sampling. Key to this is identifying the target and proposal distributions in the extended space. The target distribution is called ``conditional sequential Monte Carlo'', Algorithm \ref{alg:CSMC}. The proposal distribution is the particle filter itself, Algorithm \ref{alg:FIVO}.

We argue that it is enough to consider just an arbitrary fixed (non-adaptive) resampling schedule that always resamples at step $T$. First, consider adaptive resampling criteria, i.e. criteria that are deterministic functions of the weights $w_t^i$. For such criteria the joint density of random variables in Algorithm \ref{alg:FIVO} will be piecewise continuous, composed of $2^T$ regions corresponding to a sequence of resample/no-resample decisions. This density has a form on each piece that is exactly the same as the density for some fixed resampling schedule. Moreover, it is globally normalized, because of the sequential structure of the filter. Because Algorithm \ref{alg:CSMC} makes the same decisions, it also is partitioned along the same sets and each piece has the same fixed resampling schedule. Thus, it is enough to consider only a fixed resampling schedule. Second, notice that in the final step, step $T$, of Algorithms \ref{alg:FIVO} and \ref{alg:CSMC} resampling has no effect on $\hat{p}_N(\allobs)$. Thus, we assume that the resampling criteria of Algorithms \ref{alg:FIVO} and \ref{alg:CSMC} at step $T$ is set to always resample.  All together it is safe to assume a fixed resampling schedule with $R$ resampling events, $1 \leq R \leq T$, at steps $k_r \in \{1, \dots, T\}$ for $r \in \{0, \ldots, R\}$ with $k_R = T$ and $k_0 = 0$.

\begin{algorithm}[t]
\caption{Conditional SMC}
\label{alg:CSMC}
\begin{varwidth}[t]{0.45\textwidth}        
    \begin{algorithmic}[1]
    \State {\bfseries\scshape CSMC}$(\allobs, p, q, N)$:
    \State $y_{1:T}  \sim p(\alllatents | \allobs)$
    \State $j = 1$
    \State $\{{w}_0^i\}_{i=1}^N = \{1/N\}_{i=1}^N$
    \For {$t \in \{1, \ldots, T\}$}
        \State $\latentseq{t}^{j} = y_{1:t}$
        \For {$i \neq j$}
            \State $\latent_{t}^i \sim q_t(\latent_t | \obsseq{t}, {\latent}_{1:t-1}^i)$
            \State $\latentseq{t}^i = $ {\bfseries\scshape concat}$({\latent}_{1:t-1}^i, \latent_t^i)$
        \EndFor
    \algstore{fivo}
    \end{algorithmic}
\end{varwidth}
\begin{varwidth}[t]{0.55\textwidth}            
    \begin{algorithmic}[1]
    \algrestore{fivo}
        \State $\hat{p}_t = \left(\sum_{i=1}^N {w}_{t-1}^i \alpha_t(\latentseq{t}^i) \right) $
        \State $\{w_t^i\}_{i=1}^N = \{{w}_{t-1}^i \alpha_t(\latentseq{t}^i) / \hat{p}_t\}_{i=1}^N$
        \If{resampling criteria satisfied by $\{w_t^i\}_{i=1}^N$}
            \State $\{{w}_t^i, {\latent}_{1:t}^i\}_{i=1}^N = $ {\bfseries\scshape rsamp}$(\{w_t^i, \latentseq{t}^i\}_{i=1}^N)$
            \State $j \sim \Uniform\{1, \ldots, N\}$
            \State $\latentseq{t}^{j} = y_{1:t}$
        \EndIf
    \EndFor
    \end{algorithmic}
\end{varwidth}
\end{algorithm}

Now we derive the joint density of Algorithm \ref{alg:FIVO} and \ref{alg:CSMC} taken at each iteration \emph{after} possibly resampling. To avoid notational clutter we let $g, f$ (omitting their arguments) represent the densities of the variables in Algorithms \ref{alg:FIVO} and \ref{alg:CSMC}. Technically, we should also be keeping track of the indices that indicate the inheritance of the resampling step. So, let the random variables $\{\{w_t^i, \latentseq{t}^i\}_{i=1}^N\}_{t=1}^T$ be the particles \emph{before} resampling and $s(i) \in \{1, \ldots, N\}$ be the index that is selected for inheritance of the $i$th particle \emph{after} resampling. Then the density corresponding to Algorithm \ref{alg:FIVO} is
\begin{align}
g = \prod_{r=1}^{R} \prod_{i=1}^N w_{k_r}^{s(i)}  \prod_{k=k_{r-1} + 1}^{k_{r}} q_k(\latent_k^i | \obsseq{k}, \latentseq{k-1}^i)
\end{align}
For Algorithm \ref{alg:CSMC},
\begin{align}
 f = \prod_{r=1}^{R} \left(\prod_{i\neq j}w_{k_r}^{s(i)} \prod_{k=k_{r-1} + 1}^{k_{r}} q_k(\latent_k^i | \obsseq{k}, \latentseq{k-1}^i)  \right) \left(\frac{1}{N} \prod_{k=k_{r-1} + 1}^{k_{r}} p(\latent_k^j | \allobs, \latentseq{k-1}^j) \right) 
\end{align}
These densities are normalized, so $\expect_{g}\left[f/g\right] = 1$. Thus, our goal is to show $\hat{p}_N(\allobs) = p(\allobs)f/g$.
\begin{align}
&p(\allobs)\prod_{r=1}^{R}\frac{( \prod_{i\neq j} w_{k_r}^{s(i)} \prod_{k=k_{r-1} + 1}^{k_{r}} q_k(\latent_k^i | \obsseq{k}, \latentseq{k-1}^i)) (N^{-1} \prod_{k=k_{r-1} + 1}^{k_{r}} p(\latent_k^j | \allobs, \latentseq{k-1}^j)) }{ \prod_{i=1}^N w_{k_r}^{s(i)} \prod_{k=k_{r-1} + 1}^{k_{r}} q_k(\latent_k^i | \obsseq{k}, \latentseq{k-1}^i) } =\\
&p(\allobs)\prod_{r=1}^{R} \frac{N^{-1} \prod_{k=k_{r-1} + 1}^{k_{r}} p(\latent_k^j | \allobs, \latentseq{k-1}^j)}{w_{k_r}^j \prod_{k=k_{r-1} + 1}^{k_{r}} q_k(\latent_k^j | \obsseq{k}, \latentseq{k-1}^j)} =
\intertext{and pushing in the marginal likelihood}
&\prod_{r=1}^{R} \frac{ N^{-1} \prod_{k=k_{r-1} + 1}^{k_{r}} p(\latent_k^j, \obs_k | \obsseq{k-1}, \latentseq{k-1}^j) }{w_{k_r}^j \prod_{k=k_{r-1} + 1}^{k_{r}} q_k(\latent_k^j | \obsseq{k}, \latentseq{k-1}^j)}
\end{align}
Now, letting $\alpha_k(\latentseq{k}^i) = \frac{ p(\latent_k^i, \obs_k | \obsseq{k-1}, \latentseq{k-1}^i)}{q_k(\latent_k^i | \obsseq{k}, \latentseq{k-1}^i)}$ one can show that for this sequence of resampling times the weight $w_{k_r}^j$ telescopes into
\begin{align}
w_{k_r}^j = \frac{\prod_{k=k_{r-1} + 1}^{k_{r}} \alpha_k(\latentseq{k}^j)}{\sum_{i=1}^N\prod_{k=k_{r-1} + 1}^{k_{r}}\alpha_k(\latentseq{k}^i)}
\end{align}
and the estimator $\hat{p}_N(\allobs) = \prod_{t=1}^T \hat{p}_t$ telescopes into
\begin{align}
\hat{p}_N(\allobs) = \prod_{r=1}^R \left(\frac{1}{N}\sum_{i=1}^N\prod_{k=k_{r-1} + 1}^{k_{r}}\alpha_k(\latentseq{k}^i)\right)
\end{align}
and thus
\begin{align}
p(\allobs)\frac{f}{g} &= \prod_{r=1}^{R} \frac{ N^{-1}  \prod_{k=k_{r-1} + 1}^{k_{r}} p(\latent_k^j, \obs_k | \obsseq{k-1}, \latentseq{k-1}^j)}{w_{k_r}^j \prod_{k=k_{r-1} + 1}^{k_{r}} q_k(\latent_k^j | \obsseq{k}, \latentseq{k-1}^j)}\\
&= \prod_{r=1}^{R} \frac{ N^{-1}}{w_{k_r}^j} \prod_{k=k_{r-1} + 1}^{k_{r}} \alpha_k(\latentseq{k}^j) \\
&= \prod_{r=1}^R \frac{1}{N} \left(\sum_{i=1}^N\prod_{k=k_{r-1} + 1}^{k_{r}}\alpha_k(\latentseq{k}^i)\right) = \hat{p}_N(\allobs).
\end{align}
The result follows. An intuitive way to understand this result is the following: Algorithm \ref{alg:CSMC} matches the distribution of every random variable in the particle filter \emph{except} it interleaves a true posterior sample into the set of particles with uniform probability. The only mismatch in the densities are the normalization terms of the resampling probabilities of that privileged posterior sample, with terms $\sum_{i=1}^N\prod_{k=k_{r-1} + 1}^{k_{r}}\alpha_k(\latentseq{k}^i)$ coming from the filter's resampling and terms $N$ from conditional SMC's resampling. Of course, we never run Algorithm \ref{alg:CSMC}, it just serves to define the target density.

\subsection*{Gradients of $\FIVO{N}(\allobs, p, q)$.}
We formulate unbiased gradients of $\FIVO{N}(\allobs, p, q)$ by considering Algorithm \ref{alg:FIVO} as a method for simulating FIVO. We consider the cases when the sampling of $\latent_t^i$ is and is not reparameterized. We also consider the case where we make adaptive resampling decisions.

First, we assume that the decision to resample is not adaptive (i.e., depends in some way on the random variables already produced until that point in Algorithm \ref{alg:FIVO}), and are fixed ahead of time. When the sampling $\latent_t^i$ is not reparameterized there are three terms to the gradient: (1) the gradients of $\log \hat{p}_N(\allobs)$ with respect to the parameters conditional on the latent states, (2) gradients of the densities $q_t$ with respect to their parameters, and (3) gradients of the resampling probabilities with respect to the parameters. All together, the following is a gradient of FIVO,
\begin{equation}
\begin{aligned}
    \expect\Big[\nabla_{\theta,\phi} \log \hat{p}_N(\allobs) + \sum\nolimits_{t=1}^T \sum\nolimits_{i=1}^N & \Big(\log \frac{\hat{p}_N(\allobs)}{\hat{p}_N(\obsseq{t-1})} \nabla_{\phi} \log q_{t,\phi}(\latent_t^i|\obsseq{t}, {\latent}_{1:t-1}^i) \ +\\
    & \indicate{\text{resampling at step } t} \log \frac{\hat{p}_N(\allobs)}{\hat{p}_N(\obsseq{t})} \nabla_{\theta, \phi} \log w_t^i\Big)\Big]
\end{aligned}
\end{equation}
where $\indicate{A}$ is an indicator function. If $\latent_t^i$ is reparameterized, then the first and third terms suffice for an unbiased gradient,
\begin{equation}
\begin{aligned}
    \expect\left[\nabla_{\theta,\phi} \log \hat{p}_N(\allobs) + \sum\nolimits_{t=1}^T\sum\nolimits_{i=1}^N\indicate{\text{resampling at step } t}  \log \frac{\hat{p}_N(\allobs)}{\hat{p}_N(\obsseq{t})} \nabla_{\theta, \phi} \log w_t^i\right]
\end{aligned}
\end{equation}
In this work we only considered reparameterized $q_t$s, and we dropped the terms of the gradient that arise from resampling.

Second, when the decision to resample is adaptive, the domain of the random variables involved in simulating $\log \hat{p}_N(\allobs)$ can be partitioned into $2^T$ regions, over each of which the density is differentiable. Between those regions, the density experiences a jump discontinuity. Thus, there are additional terms to the gradient of $\FIVO{N}(\allobs, p, q)$ that correspond to the change in the regions of continuity as the parameters change. These terms can be written as surface integrals over the boundaries of the regions. We drop these terms in practice.

\subsection*{Proof of Proposition 2.}  Assume $p(\latentseq{t-1} | \obsseq{t}) = p(\latentseq{t-1} | \obsseq{t-1})$ for all $t \in \{2, \ldots, T\}$. We will show $\FIVO{N}(\allobs, p, q) = \log p(\allobs)$ at $q(\latent_t | \latentseq{t-1}, \obsseq{t}) = p(\latent_t | \latentseq{t-1}, \obsseq{t})$. We will do this by induction, showing that every particle has a constant weight and that $\hat{p}_N(\allobs) = p(\allobs)$ is a constant. For $t=1$ we have
\begin{align}
\alpha_1^i(\latent_1) = \frac{p_1(\obs_1, \latent_1)}{p(\latent_1 | \obs_1)} = p_1(\obs_1)
\end{align}
Thus, all particles have the same weight and $\hat{p}_1 = p_1(x_1)$. Now for any $t$ we have that the weights must be $1/N$ since the particles all have the same weight and
\begin{align}
\alpha_t^i(\latentseq{t}) &= \frac{p_t(\obs_t, \latent_t | \latentseq{t-1}, \obsseq{t-1})}{p(\latent_t | \latentseq{t-1}, \obsseq{t})} \\
&= \frac{p(\latentseq{t}, \obsseq{t})}{p( \latentseq{t-1}, \obsseq{t-1})p(\latent_t | \latentseq{t-1}, \obsseq{t})} \\
&= \frac{p(\obsseq{t})}{p(\obsseq{t-1})}  \frac{p(\latentseq{t} | \obsseq{t})}{p( \latentseq{t-1} | \obsseq{t-1})p(\latent_t | \latentseq{t-1}, \obsseq{t})} \\
&= \frac{p(\obsseq{t})}{p(\obsseq{t-1})}  \frac{p(\latentseq{t} | \obsseq{t})}{p( \latentseq{t-1} | \obsseq{t})p(\latent_t | \latentseq{t-1}, \obsseq{t})} \\
&= \frac{p(\obsseq{t})}{p(\obsseq{t-1})} 
\end{align}
and thus,
\begin{align}
\hat{p}_N(\allobs) =  p_1(x_1) \prod_{t=2}^T  \frac{p(\obsseq{t})}{p(\obsseq{t-1})}  = p(x_{1:T})
\end{align}

\subsection*{Implementation details}
We initialized weights using the Xavier initialization  \cite{glorot2010understanding} and used the Adam optimizer \cite{kingma2014adam} with a batch size of 4.  During training, we did not truncate sequences and performed full backpropagation through time for all datasets. For the results presented in Sections \ref{subsection:experimental-polyphonic} and \ref{subsection:experimental-speech} we performed a grid search over learning rates $\{3 \times 10^{-4},1\times 10^{-4}, 3 \times 10^{-5}, 1 \times 10^{-5}\}$ and picked the run and early stopping step by the validation performance.

\subsection*{Evaluation and Comparison of Bounds}
Comparing models trained with different log-likelihood lower bounds is challenging because calculating the actual log-likelihood is intractable. Burda \emph{et al.}~\cite{burda2015importance} showed that the IWAE bound is at least as tight as the ELBO and monotonically increases with $N$. This suggests comparing models based on the IWAE bound evaluated with a large $N$. However, we found that IWAE and ELBO bounds tended to diverge for models trained with FIVO. 

Although FIVO is not provably a tighter bound than the ELBO or IWAE, our experiments suggest that this tends to be the case in practice. In Figure~\ref{fig:bound-comparison}, we plotted all three bounds over training for a representative experiment. All plots use the same model architecture, but the training objective changes in each panel. For the model trained with IWAE, the FIVO and IWAE bounds are tighter than their counterparts on the model trained with ELBO, suggesting that the model trained with IWAE is superior. The ELBO bound evaluated on the model trained with IWAE, however, is lower than its counterpart on the model trained with the ELBO. For the model trained with FIVO, both IWAE and ELBO bounds seem to diverge, but the FIVO bound outperforms the FIVO bounds on both of the other models. As in the figure, we generally found that the same model evaluated with FIVO, IWAE, and ELBO produced values descending in that order. 

We suspect that $q$ distributions trained under the FIVO bound are more entropic than those trained under ELBO or IWAE because of the resampling operation. During training under FIVO, $q$ is able to propose state transitions that could poorly explain the observations because the bad states will be resampled away without harming the final bound value. Then, when a FIVO-trained $q$ is evaluated with ELBO or IWAE it proposes poor states that are not resampled away, leading to a poor final bound value. Conversely, $q$s trained with ELBO and IWAE are not able to fully leverage the resampling operation when evaluated with the FIVO bound.

Because of this behavior, we chose to optimistically evaluate models trained with IWAE and ELBO by reporting the maximum across all the bounds. For models trained with FIVO, we reported only the FIVO bound. We felt this evaluation scheme provided the strongest comparison to existing bounds.

\begin{figure*}[t]
\hspace{-5pt}
    \centering
    \includegraphics[scale=0.39]{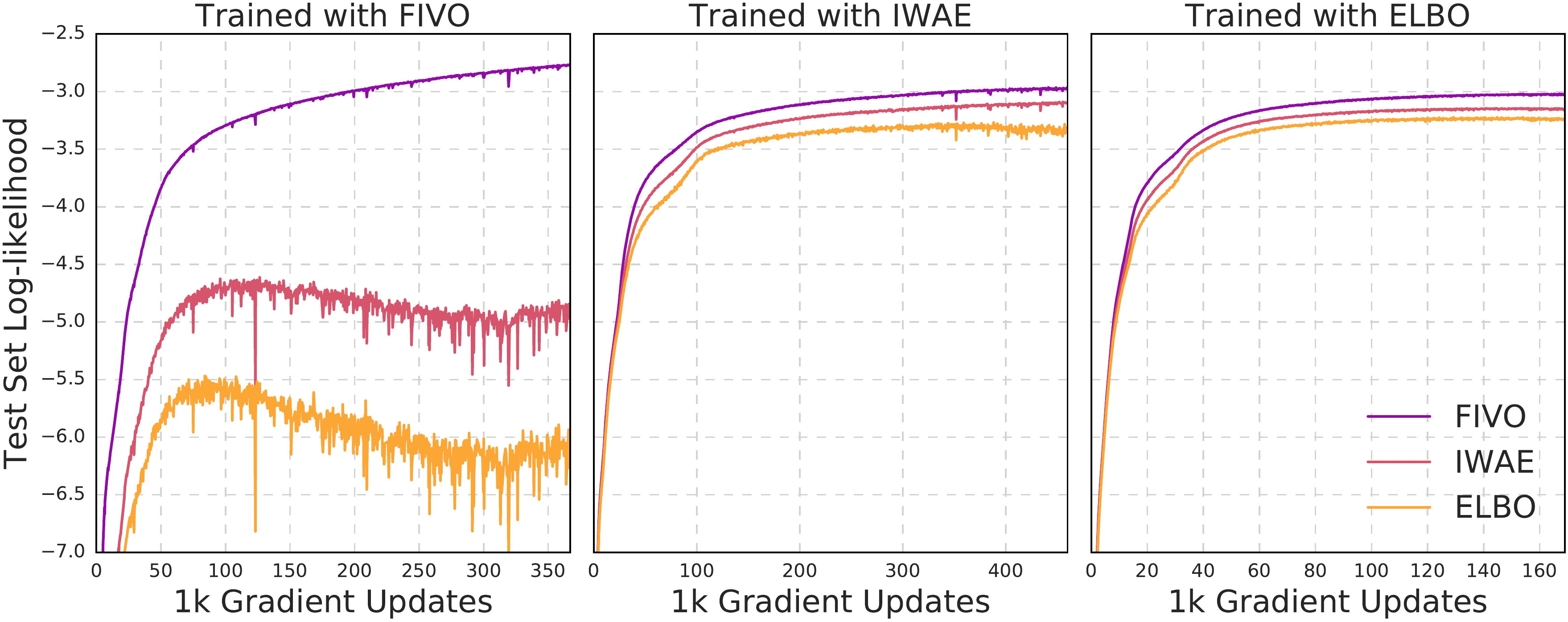}
    \caption{Comparison of ELBO, IWAE, and FIVO bounds. We plot the ELBO ($\ELBO$), IWAE ($\IWAE{128}$), and FIVO ($\FIVO{128}$) test log-likelihood lower bounds for a fixed model architecture trained with FIVO (left), IWAE (middle), and ELBO (right). The models are VRNNs trained on the Nottingham dataset with 64 units, $N=16$, and learning rate $3\times 10^{-5}$.}
    \label{fig:bound-comparison}
\vspace{-1\baselineskip}
\end{figure*}

\begin{table}[t]
\label{train-test-table}
\begin{center}
{\footnotesize
\begin{tabular}{@{}c l c c c c c c c c@{}}
& & \multicolumn{2}{c}{Nottingham} & \multicolumn{2}{c}{JSB} & \multicolumn{2}{c}{MuseData} & \multicolumn{2}{c}{Piano-midi.de} \\
\cmidrule(r){3-4} \cmidrule(r){5-6} \cmidrule(l){7-8} \cmidrule(l){9-10}
$N$ & Bound & Train & Test & Train & Test & Train & Test  & Train & Test \\ 
\toprule
 \multirow{3}{*}{4} 
 & ELBO & -2.54 & -3.00 & -4.99 & -8.60 &  -6.20 & -7.15 &  -6.26 & -7.81 \\
 & IWAE & -1.72 & -2.75 &  -4.81 & -7.86 &-5.86 & -7.20 & -6.25 & -7.86  \\
 & FIVO & {\bfseries -1.35} & {\bfseries -2.68} & {\bfseries -4.59} & {\bfseries -6.90} & {\bfseries -5.64} & {\bfseries -6.20} & {\bfseries -5.73} & {\bfseries -7.76} \\
 \midrule
 \multirow{3}{*}{8} 
 & ELBO & -2.65             & -3.01 & -4.94             & -8.61             & -5.85             & -7.19             & -6.20             & -7.83 \\
 & IWAE & -1.59             & -2.90             &  {\bfseries -4.47} & -7.40             & -6.23             & -7.15 & -5.71             & -7.84  \\
 & FIVO & {\bfseries -1.46} & {\bfseries -2.77} &  -5.41             & {\bfseries -6.79} &  {\bfseries -5.02} & {\bfseries -6.12} &  {\bfseries -5.69} & {\bfseries -7.45}  \\
 \midrule
 \multirow{3}{*}{16} 
 & ELBO & -2.06 & -3.02 & -5.08 & -8.63 & -6.22 & -7.18 & -6.71 & -7.85 \\
 & IWAE & -2.12 & -2.85 &  -4.86 & -7.41 & -6.54 & -7.13 & -5.17 & -7.79\\
 & FIVO & {\bfseries -1.33} & {\bfseries -2.58} &  {\bfseries -4.45} & {\bfseries -6.72} &  {\bfseries -5.44} & {\bfseries -5.89}  & {\bfseries -5.08} & {\bfseries -7.43} \\
\bottomrule
\hspace{1em}
\end{tabular}
\caption{Train and test set marginal log-likelihood bounds for VRNNs trained on the polyphonic music datasets. We report $\max\{\ELBO, \IWAE{128},\FIVO{128}\}$ for ELBO and IWAE models and $\FIVO{128}$ for FIVO models. VRNNs trained on the JSB Chorales used 32 units, all other models used 64 units.}}
{\footnotesize
\begin{tabular}{@{}c l c c c c@{}}
& & \multicolumn{4}{c}{TIMIT} \\
 \cmidrule(r){3-6}
& & \multicolumn{2}{c}{64 units} & \multicolumn{2}{c}{256 units}\\
\cmidrule(r){3-4} \cmidrule(r){5-6}
$N$ & Bound & Train & Test & Train & Test\\
\toprule
 \multirow{3}{*}{4} 
 & ELBO & 40,237             & 41,236             & 51,688             & 51,674 \\
 & IWAE & 40,939             & 41,076             & 52,284             & 52,290 \\
 & FIVO & {\bfseries 46,911} & {\bfseries 46,927} &  {\bfseries 59,180} & {\bfseries 59,058} \\
 \midrule
 \multirow{3}{*}{8} 
 & ELBO & 42,892             & 44,007             & 49,872             & 51,055  \\
 & IWAE & 43,713             & 45,213             & 52,827             & 52,859  \\
 & FIVO & {\bfseries 47,343} & {\bfseries 47,259} &  {\bfseries 61,080} & {\bfseries 62,685}\\
 \midrule
 \multirow{3}{*}{16} 
 & ELBO & 43,175             & 42,912             & 51,490             & 51,154 \\
 & IWAE & 43,331             & 44,472             & 53,797             & 54,305\\
 & FIVO & {\bfseries 48,685} & {\bfseries 49,866} & {\bfseries 61,929} & {\bfseries 62,772} \\
 \bottomrule
 \hspace{1em}
\end{tabular}
\caption{Train and test set log likelihood bounds for VRNNs trained on the TIMIT dataset with different bounds and numbers of particles. We report $\max\{\ELBO, \IWAE{128},\FIVO{128}\}$ for ELBO and IWAE models and $\FIVO{128}$ for FIVO models. These results were calculated on data that was standardized (mean-centered and scaled to unit variance) using training set statistics.}}
 \end{center}
\end{table}

\subsection*{Evaluating TIMIT Log-Likelihoods}

We reported log-likelihood scores for TIMIT relative to an ELBO baseline instead of raw log-likelihoods. Previous papers (e.g., \cite{chung2015recurrent,fraccaro2016sequential}) report the log-likelihood of data that have been mean centered and variance normalized, but it would be more proper to report the results on the un-standardized data. Specifically, if the training set has mean $\mu$ and variance $\sigma^2$ and the model outputs $\hat{\mu}$ and $\hat{\sigma}^2$, then the un-standardized test data would be evaluated under a $\mathcal{N}(\hat{\mu}\sigma + \mu, \hat{\sigma}^2\sigma^2)$ distribution.

Log-likelihoods produced by these approaches differ by a constant offset that depends on $\sigma$. Because the offset is a function of only training set statistics, it does not affect relative comparison between methods. Because of this we chose to report log-likelihoods relative to a baseline instead of absolute numbers. Absolute numbers calculated on standardized data are reported in Tables 3, 4, and 5 to allow for comparisons with other papers.

\begin{table}[t]
\begin{center}
{\footnotesize
\begin{tabular}{@{}l c c c c c c c c c c@{}}
& \multicolumn{2}{c}{Nottingham} & \multicolumn{2}{c}{JSB} & \multicolumn{2}{c}{MuseData} & \multicolumn{2}{c}{Piano-midi.de} & \multicolumn{2}{c}{TIMIT} \\
\cmidrule(r){2-3} \cmidrule(r){4-5} \cmidrule(l){6-7} \cmidrule(l){8-9} \cmidrule(l){10-11}
Bound & Train & Test & Train & Test & Train & Test & Train & Test & Train & Test \\
\toprule
 ELBO   & -2.95 &  {\bfseries -2.40} & -8.68 & {\bfseries -5.48} & -7.52 & -6.54 &  {\bfseries -7.86} & {\bfseries -6.68} &  {\bfseries 41805} & {\bfseries 40757} \\
 ELBO+s & {\bfseries -2.91} & -2.59 & {\bfseries -8.64} & -5.53 & {\bfseries -7.51} & {\bfseries -6.48} & -7.87 & -6.77 & 40743 & 39832 \\
 \midrule
 IWAE   & -3.03 & -2.52 & -8.61 & -5.77 & -7.55 & -6.54 & -7.84 &  {\bfseries -6.74} & 42174 & 42226 \\
 IWAE+s & {\bfseries -2.83} &  {\bfseries -2.37} & {\bfseries -8.15} & {\bfseries -4.63} & {\bfseries -7.33} & {\bfseries -6.47} &  {\bfseries -7.81} & {\bfseries -6.74} & {\bfseries 44294} & {\bfseries 43387} \\
 \midrule
FIVO   & {\bfseries -2.87} & {\bfseries -2.29} & -7.06 & -4.08 & {\bfseries -6.55} & {\bfseries -5.80} & {\bfseries -7.75} & -6.41 & 49653 & 47748 \\
 FIVO+s & -2.92 &-2.34 & {\bfseries -6.91} & {\bfseries -3.83} & -6.68 & -5.87 & -7.80 & {\bfseries -6.34} & {\bfseries 52644} & {\bfseries 50530} \\
 \bottomrule
 \hspace{1em}
\end{tabular}}
\caption{Train and test set log-likelihood bounds comparing smoothing and non-smoothing models.  We report $\max\{\ELBO, \IWAE{128},\FIVO{128}\}$ for ELBO and IWAE models and $\FIVO{128}$ for FIVO models. All models were trained with $N=4$ and a learning rate of $3\times 10^{-5}$. The JSB Chorales model used 32 units and the Musedata model used 256 units. All other models used 64 units. TIMIT results were calculated on data that was standardized (mean-centered and scaled to unit variance) using training set statistics.}
 \label{table:train-and-test-smoothing}
 \end{center}
\end{table}
\end{document}